\newtheorem{theorem}{Theorem}[section]
\newtheorem{lemma}[theorem]{Lemma}
\newtheorem{assumption}{Assumption}
\newtheorem{definition}{Definition}
\newcommand{\Bez}{B\'ezier }
\title{\LARGE \bf
 Intent-Aware Probabilistic Trajectory Estimation for Collision Prediction with Uncertainty Quantification
}
\author{Andrew Patterson, Arun Lakshmanan and Naira Hovakimyan
\thanks{Andrew Patterson, Arun Lakshmanan, and Naira Hovakimyan
        are with the Department of Mechanical Science and Engineering,
        University of Illinois at Urbana-Champaign, Urbana, IL 61801, USA
        {\tt\small \{appatte2, lakshma2, nhovakim\}@illinois.edu}}%
}
\begin{document}
\maketitle
\thispagestyle{empty}
\pagestyle{empty}
\begin{abstract}
Collision prediction in a dynamic and unknown environment relies on knowledge of how the environment is changing. Many collision prediction methods rely on deterministic knowledge of how obstacles are moving in the environment. However, complete deterministic knowledge of the obstacles' motion is often unavailable.
This work proposes a Gaussian process based prediction method that replaces the assumption of deterministic knowledge of each obstacle's future behavior with probabilistic knowledge, to allow a larger class of obstacles to be considered. 
The method solely relies on position and velocity measurements to predict collisions with dynamic obstacles. We show that the uncertainty region for obstacle positions can be expressed in terms of a combination of polynomials generated with Gaussian process regression. 
To control the growth of uncertainty over arbitrary time horizons, a probabilistic obstacle intention is assumed as a distribution over obstacle positions and velocities, which can be naturally included in the Gaussian process framework.
Our approach is demonstrated in two case studies in which (i), an obstacle overtakes the agent and (ii), an obstacle crosses the agent's path perpendicularly. In these simulations we show that the collision can be predicted despite having limited knowledge of the obstacle's behavior.
\end{abstract}

\section{INTRODUCTION}
\label{s:intro}
When deploying autonomous systems that are safe, it is important to not only ensure the safety of the vehicle in a static environment, but also to define interactions with other agents.
These agents can be cooperative or non-cooperative depending on each agent's ability or desire to communicate and compromise. 
When agents are cooperative, we can take advantage of inter-agent communication to ensure that performance objectives are achieved. Methods that take advantage of agent cooperation are described by the authors of~\cite{kaminer17} and~\cite{bilal19}.
Even without cooperation, we expect an autonomous system to be able to execute a planned trajectory faithfully while avoiding collisions with other agents. 
A high level planner can re-plan around predicted collisions by adjusting the planned trajectory of the vehicle such that no collision is predicted. This planning is expected to occur after mission start and before a collision is imminent. Once collision is imminent, feedback-based methods can be used to guarantee collision avoidance, as found in~\cite{marinho18} and~\cite{cichella17}.

Re-planning is necessary in a changing environment; 
however, to effectively re-plan, an expected future state of the environment is necessary.
Without knowledge of these trajectories, an estimate must be constructed.
Given deterministic knowledge of a vehicle's trajectory, collision can be avoided using the methods described in~\cite{bilal17},~\cite{paden16} and~\cite{wurts18}.
In this work, we consider a single obstacle and a single agent attempting to avoid the obstacle. Predicting the future behavior of an obstacle is difficult with partial or imperfect knowledge of the obstacle's dynamics and intentions.
With knowledge of the obstacle dynamics and control inputs, we can predict the obstacle's future behavior accurately. 
If the control inputs are not known but the dynamics are well understood, then the future behavior can be quantified in terms of reachability sets as proposed in~\cite{zhou16}. Even with knowledge of the obstacle dynamics, as the the prediction time horizon increases, the reachability set of many systems will grow to cover the entire space. Methods designed to predict collision using only this information tend to be very conservative, suitable for trajectory estimation on a short time scale. When even less information is known about the obstacle dynamics, the problem becomes  increasingly difficult to solve. 

To estimate the obstacle's future behavior, without being overly conservative, the estimation method can include some knowledge of the agent's intention.
Methods of predicting the agent's intention with Markovian models are given in~\cite{bandy13}, which uses a mixed observable Markov decision process, and in~\cite{vasquez09}, which uses growing hidden Markov models. The authors of~\cite{bandy13} consider an urban autonomous driving context, where pedestrian intentions need to be identified in real time based on past observations. The intentions are a finite set of points in the environment, and the planner assigns probabilities to each point. The authors of~\cite{vasquez09} extend the Markovian model in a way that allows both the probability of each intention and the set of intentions to be updated at every time-step.

Once the intention is established, probabilistic trajectory estimates can be generated in different ways, varying in complexity and dependence on the knowledge of obstacle dynamics.
Some of the simplest are constant extrapolation, such as the constant heading and velocity assumptions as in~\cite{miller08}. These methods are fast, easy to implement and perform well in situations where the obstacle is assumed to be in steady-state. The intention of these obstacles is implicitly modeled as a desire to continue previous behavior.
Other methods use the knowledge of obstacle dynamics to propagate the current state into the future, rather than constant extrapolation. An example of this type of method can be found in~\cite{yokoyama18}, in which state propagation requires knowledge of agent dynamics. 
The author of~\cite{yokoyama18} estimates the obstacle's intention using knowledge of its optimization method and cost function. With the assumption that the obstacle is attempting to minimize a certain cost function, the inverse of the optimization procedure is computed to predict the optimal trajectory.
This trajectory is then used as the intention.
In addition to model-based methods, there are data based methods that predict obstacle trajectories by propagating the obstacle state through Monte-Carlo methods, where the state propagation is learned from a large data-set of previous behaviors. These methods typically incur a large, upfront training cost which can be prohibitively expensive without a representative data-set. In~\cite{hamlet15}, a pre-trained Bayesian dynamic network is assumed, and the prediction is performed by considering 100 simulations of the Bayesian network and around 20 look-ahead time steps.
The method presented in~\cite{ellis09} uses a large data-set of known pedestrian trajectories to create a velocity field map of the area in question. Trajectory estimation is then performed by using 250 particles. In these methods, the intention is encoded naturally by the data. Clustering the final locations of the particles would return intentions in the form of a distribution over particles. 
Each of these methods can be used for collision prediction, but the collision prediction step becomes more expensive or has limited accuracy, depending on the number of points checked.

To avoid the trade-off between accuracy and sampling seen in these methods, the authors of~\cite{bilal16} and~\cite{usenko17} use \Bez and B-spline basis functions for fast collision prediction.
Using these basis functions, collision prediction can be quickly and analytically checked and a new plan can be computed by modifying the trajectory parameters.
The method used in this paper for collision prediction is detailed in~\cite{arun19} and can be implemented with any absolutely continuous curve.

To take advantage of this computationally efficient method, we use a data-based method that produces predictions of a known basis.
In this paper, we present a Gaussian process regression method for predicting obstacle trajectories given a probabilistic intention estimate. By choosing an appropriate covariance function, we can fix the basis of the mean and variance functions that allows collisions to be quickly predicted. This paper has three main contributions:
\begin{enumerate}
    \item a data-based method for extrapolating obstacle trajectories,
    \item a method for incorporating a probabilistic intention into the estimation method, and
    \item we show that for the cubic-spline kernel, we obtain an uncertainty region as a combination of polynomials, and the square-root of polynomials  can be used for collision prediction.
\end{enumerate}

In Section~\ref{s:prelim}, we present preliminary information on Gaussian processes, multi-output Gaussian processes and collision prediction. In Section~\ref{s:prob}, the necessary definitions and assumptions are provided and we present the problem statement. In Section~\ref{s:method}, we present the methods and analysis results for trajectory estimation with a probabilistic intent. Finally, in Section~\ref{s:results} we demonstrate the method in two collision avoidance scenarios.

\section{PRELIMINARIES}
\label{s:prelim}
\subsection{Gaussian Process}
Gaussian process regression is a data based regression method  that  quantifies the uncertainty of predictions and allows the basis of the trajectory estimate to be chosen. An overview of Gaussian process regression can be found in~\cite{rasmussen06}, which provides general information on the regression method, and~\cite{roberts12}, which focuses on timeseries modelling. In~\cite{roberts12}, the authors discuss the model design choices that incorporate domain knowledge, such as the choice of covariance functions, updating methods and hyperparameter optimization.

A Gaussian process is a collection of random variables, where any finite numbers are jointly Gaussian. We can define these processes as a distribution over functions on $\mathbb{R}^d$ as
\begin{align*}
    F \sim \mathcal{GP}(M,K),
\end{align*}
where $M : \mathbb{R}^d \rightarrow \mathbb{R}$ is the mean function, and $K:\mathbb{R}^d \times \mathbb{R}^d \rightarrow \mathbb{R}$ is a symmetric positive definite covariance kernel function.
The Gaussian process condition is then satisfied if for any finite set of sample times $T=\{ t_1,\dots t_n\}$, the function evaluated at the times in $T$ are samples from a multi-variate Gaussian distribution, that is,
\begin{align*}
    F(T) \sim \mathcal{N}(M(T),K(T,T)).
\end{align*}
For $d=1$, the mean and covariance functions, evaluated at $T$, are a vector and a matrix respectively.
The properties of the Gaussian process are completely determined by the mean function and covariance function. The mean function is often taken to be identically zero; in this case the predictive distribution for a test time, $t$, is given by 
\begin{align}
    p(F(t)\, | \, D,t) &= \mathcal{N}(\mu(t),\sigma^2(t)),
        \label{eq:gp_prob}
    \\
    \mu(t) &= K(T,t) ^\top ( K(T,T) + \Sigma^2)^{-1} Y\nonumber
    \\
    \sigma^2(t) & = K(t,t) \nonumber
    \\
    &- K(T,t) ^\top( K(T,T) + \Sigma^2)^{-1}K(T,t),\nonumber
\end{align}
where $D$ is a collection of times and corresponding outputs $D=\{T,\,Y\}$. 
The individual measurements, elements of  $Y$, are given for every sample time in $T$, so we have $Y=[y_1, \dots,y_n]^\top$. Each element is normally distributed.
The measurement covariance matrix is $\Sigma^2$.  This matrix is a diagonal for independent measurements.
The matrix $P = ( K(T,T) + \Sigma^2)^{-1}$ is called the precision matrix.

The covariance function used in this paper is the cubic spline covariance function, 
\begin{align}
    &k_f(t,t^{\top}) \nonumber
    \\
    &= \theta^2_f \left[\frac{1}{3} \textnormal{ min}^3(\tilde{t},\tilde{t}^{\top}) +\frac{1}{2}\left|t-t^{\top}\right|\textnormal{ min}^2(\tilde{t},\tilde{t}^{\top})\right],
    \label{eq:cubic}
\end{align}
where $t$ is any input time, $\tilde{t}\coloneqq t+\tau$, is the time shifted input, shifted by a constant $\tau>0$, such that the covariance function is positive semi-definite (in this paper $\tau=11$). 
The scaling hyperparameter is denoted $\theta_f$.
The use of the cubic-spline covariance function is a natural choice in the domain of dynamically defined trajectories since they correspond to the solution of double integrator systems with piece-wise constant input commands. Even if the system is not a double integrator, a double integrator is a common simplified model used in systems with only a force input. 
 This kernel is discussed in depth in~\cite{mahsereci18},~\cite{wahba90} and~\cite{wendland05}.

\subsection{Multi-Output Gaussian Process}
Generally, a Gaussian process may be defined on any $d$-dimensional space. In this paper, we will consider multi-output Gaussian processes. The first output is the predicted mean given a data-set, and the second is time-derivative of the data-set. Since differentiation is a linear operator and our covariance function is differentiable, we can perform regression with a single Gaussian process:
\begin{align}
F=
    \begin{bmatrix}
    f \\ f'
    \end{bmatrix}
    \sim 
    \mathcal{GP}\left(
    \begin{bmatrix}
    \mu_{f} \\ \mu_{f'}
    \end{bmatrix},
        \begin{bmatrix}
    k_f & k_f^\partial \\ ^\partial k_f & ^\partial k_f^\partial
    \end{bmatrix}
    \right), 
    \label{eq:mimo}
\end{align}
where the derivative of a covariance function is given by
\begin{align*}
    ^{\partial(i)} k_f ^{\partial(j)} = \frac{\partial^{i+j}k_f(t,t^{\top})}{\partial t^i \partial t^{\top j}}.
\end{align*}
In this multi-output model, the mean function is a vector of two functions, and the covariance is a matrix of functions.  Note that the subscripts $f$ and $f'$ can be replaced with either $x$ or $y$ to indicate spatial dimensions. 
For the cubic spline covariance function in Equation~\eqref{eq:cubic}, these derivatives are given by the authors of~\cite{mahsereci18} as
\begin{align*}
    &k_f ^\partial(t,t^{\top}) = 
    \\&\quad\theta_{f}\theta_{f'}\left[\mathbb{I}(t<t^{\top})t^2/2 + \mathbb{I}(t\geq t^{\top})(tt^{\top}-t^{\top 2}/2)\right],
    \\&^\partial k_f (t,t^{\top}) = 
    \\&\quad\theta_f\theta_{f'}\left[\mathbb{I}(t^{\top}<t)t^{\top 2}/2 + \mathbb{I}(t^{\top}\geq t)(tt^{\top}-t^2/2)\right],
    \\
    &^\partial k_f ^\partial(t,t^{\top}) = \theta_{f'}^2\textnormal{min}(t,t^{\top}),
\end{align*}
where $\mathbb{I}$ is the indicator function and $\theta_{f'}$ is the function derivative scaling hyperparameter. 
Additional information on this joint estimation scheme is presented in~\cite{rasmussen06},~\cite{roberts12} and~\cite{eriksson18}.
\subsection{Minimum Distance}
Minimum distance calculation is central to collision prediction. To avoid collision, we wish to keep the minimum distance between the agent trajectory and a specified obstacle uncertainty region greater than a safety distance. 
In general, 
given parametric equations $\beta: \mathcal{T} \to \mathbb{R}$ and $\delta: \mathcal{T} \to \mathbb{R}$ defined over a closed interval $\mathcal{T} \subset \mathbb{R}$, we define the $\delta$-region around the parametric equation $\beta$ as
\[
\mathcal{B}^\delta(t) = \{x \in \mathbb{R} \ : \  \beta(t) - x \le | \delta(t) | \}.
\]
The minimum distance between a parametric equation $\alpha: \mathcal{T} \to \mathbb{R}$ and $\mathcal{B}_\delta$ is given by 
\begin{equation*}
    d_\textrm{min}(t, \alpha, \mathcal{B}^\delta) = \min_{b \in \mathcal{B}^\delta(t)} | \alpha(t) - b |,
\end{equation*}
for all $t \in \mathcal{T}$.
\section{PROBLEM FORMULATION}
\label{s:prob}
Consider a mission in which an agent must navigate a planar environment in the presence of a dynamic obstacle. 
The obstacle is non-cooperative and moves through the $x$-$y$~plane without providing information about its future trajectory to the agent. 
Based on the obstacle's motion, the agent must predict if its planned trajectory will cause a collision. We formalize the scenario with the following definitions.
\subsection{Definitions and Assumptions}
\begin{definition}[Agent]
 The agent, $A$, is  represented by a coordinate in the $x$-$y$ plane and its safety distance $\Delta_\textrm{safe}>0$.
\end{definition}
\begin{assumption}[Agent Trajectory]
    We assume that the agent has a known trajectory, $\psi:\mathcal{T}\rightarrow \mathbb{R}^2$, for any time in $\mathcal{T}\subset \mathbb{R}$.
    The $x$ component of this trajectory is denoted by $\psi_x$, and the $y$ component is referred to as $\psi_y$.
\end{assumption}

\begin{definition}[Obstacle]
    The obstacle, $O$, is represented by a coordinate in the $x$-$y$ plane.
\end{definition}
\begin{assumption}[Initial Separation]
    Assume that the agent trajectory is initially separated from the obstacle location. This separation must be larger than $\Delta_\textrm{safe}$.
\end{assumption}
\begin{definition}[Intention]
We define the probabilistic intention, $I$, as a distribution over positions and velocities at a future time,  the intention time, $t_I$. Note that $x$ and $y$ indicate positions, and $x'$ and $y'$ indicate velocities.
\end{definition}
\begin{assumption}[Intention Distribution]
    For each dimension in the plane, indicated by subscript $x$ or $y$, we assume that the intention follows a normal distribution:
    \begin{align*}
       & I_x \sim \mathcal{N}(\mu_{Ix},\sigma^2_{Ix}), \qquad I_y \sim \mathcal{N}(\mu_{Iy},\sigma^2_{Iy})
        \\
        &I_{x'} \sim \mathcal{N}(\mu_{Ix'},\sigma^2_{Ix'}), \qquad I_{y'} \sim \mathcal{N}(\mu_{Iy'},\sigma^2_{Iy'}),
    \end{align*}
    and that these values are known.
\end{assumption}
\begin{assumption}[Constant Intention]
    For the duration of the collision avoidance task, it is assumed that the intention is constant, i.e. the means, variances and intention time are fixed.
\end{assumption}
\begin{assumption}[Dynamic Behavior]
    We assume that the motion of the obstacle is governed by a differential equation.
\end{assumption}
\begin{definition}[Expected Position]
    In~\cref{eq:mimo}, we see that the mean function is a vector of two functions. 
    The expected position of the obstacle is, $\mu_f$, the first element in this vector of functions. Note that $f$ can be replaced with either $x$ or $y$ to indicate spatial dimension. 
\end{definition}
\begin{definition}[Position Variance]
    In~\cref{eq:mimo}, we see that the covariance function is a matrix of four functions. 
    Applying the update equations in~\cref{eq:gp_prob} to this matrix of functions yields $\sigma^2(t)$, another matrix of functions:
    \begin{align*}
        \sigma^2(t) = 
        \begin{bmatrix}
           \sigma^2_f(t) &  \sigma^2_{f,f'}(t)
           \\
             \sigma^2_{f',f}(t) &\sigma^2_{f'}(t) 
        \end{bmatrix}.
    \end{align*}
    The position variance of the obstacle is, $\sigma^2_f$, the first element in this matrix of functions. Note that $f$ can be replaced with either $x$ or $y$ to indicate spatial dimension. 
\end{definition}
\begin{definition}[Uncertainty Region]
\label{def:ur}
    We define the uncertainty region for each dimension as
    \begin{align*}
        &\Psi_{x}^{2\sigma}(t)=\{x\in\mathbb{R} \ : \ \mu_x(t) - x  \leq |2\sigma_x(t)| \} \quad \textnormal{and}
        \\
        &\Psi_{y}^{2\sigma}(t)=\{y\in\mathbb{R} \ : \ \mu_y(t) - y  \leq |2\sigma_y(t)| \},
    \end{align*}
    where $\mu(t)$ and $\sigma(t)$ are the posterior mean and standard deviation for each dimension, as defined in Equation~\eqref{eq:gp_prob}. 
\end{definition}
\begin{assumption}[Sequential Data]
    The data are collected sequentially, that is the elements of the time vector, $T=[t_1,\dots,t_n]$ and the corresponding measurements are ordered such that $t_k<t_{k+1}$ for a positive integer $k<n$.
    \label{a:sequential}
\end{assumption}
\subsection{Time Intervals}
While the trajectory of the obstacle is unknown for future times, we assume that the trajectory can be measured at discrete time instances between time $t_a$ and $t_b$, called the observation time interval, $\mathcal{T}_O = [t_a,t_b]$. During this interval, it is assumed that we have a set of noisy measurements of the vehicle's position and velocity.
The prediction time interval is the time interval between the last measurement and the intention time: $\mathcal{T}_P =(t_b,t_I]$. The union of these intervals is $\mathcal{T}$, the time interval of interest.
\subsection{Problem Statement}
Given full knowledge of the agent's planned trajectory on the time interval of interest $\mathcal{T}$, a data-set ${D}$ of measured positions and velocities of the obstacle on the observation time interval $\mathcal{T}_O$, and a probabilistic intention $\mathcal{I}$, we  wish to determine whether or not the agent will collide with the uncertainty region, that is: 
\begin{align}
    \label{eq:prob}
    \sqrt{d_\textrm{min}^2(t, \psi_x, \Psi_x^{2\sigma}) + d_\textrm{min}^2(t, \psi_y, \Psi_y^{2\sigma})} > \Delta_\textrm{safe},
\end{align}
for any $t\in\mathcal{T}_O$. If this inequality is violated, we say collision has occurred.
\section{METHOD}
\label{s:method}
\subsection{Covariance Choice}
Recall that our Gaussian process is defined in terms of a mean and joint position-velocity covariance function:
\begin{align}
F=
    \begin{bmatrix}
    f \\ f'
    \end{bmatrix}
    \sim 
    \mathcal{GP}(
    \begin{bmatrix}
    \mu_{f} \\ \mu_{f'}
    \end{bmatrix},
    \begin{bmatrix}
    k_f & k_f^\partial \\ ^\partial k_f & ^\partial k_f^\partial
    \end{bmatrix}
    ).
    \label{eq:covF}
\end{align}
This Gaussian process can be updated through Gaussian process regression using Equation~\eqref{eq:gp_prob}. While the position and velocity information of the vehicle are assumed to be correlated by the choice of covariance function, the spatial coordinates are assumed to be independent. This assumption allows the trajectory to be estimated in each dimension separately and does not require additional assumptions on dynamic correspondences between the dimensions.
These independent Gaussian processes are given by the equations:
\begin{align*}
F_x&=
    \begin{bmatrix}
    x \\ x'
    \end{bmatrix}
    \sim 
    \mathcal{GP}\left(
    \begin{bmatrix}
    \mu_{x} \\ \mu_{x'}
    \end{bmatrix},
        \begin{bmatrix}
    k_x & k_x^\partial \\ ^\partial k_x & ^\partial k_x^\partial
    \end{bmatrix}
    \right),
    \\
    F_y&=
    \begin{bmatrix}
    y \\ y'
    \end{bmatrix}
    \sim 
    \mathcal{GP}\left(
    \begin{bmatrix}
    \mu_{y} \\ \mu_{y'}
    \end{bmatrix},
    \begin{bmatrix}
    k_y & k_y^\partial \\ ^\partial k_y & ^\partial k_y^\partial
    \end{bmatrix}
    \right),
\end{align*}
where $F_x$ and $F_y$ are the Gaussian process estimates for the positions and velocities of the agent in the $x$ and $y$ coordinates respectively. Their corresponding mean vectors are $M_x$ and $M_y$, while the covariance matrices are denoted by $K_x$ and $K_y$.

In the following theorem, we show that the future location of the obstacle can  be estimated, and the uncertainty is quantified in terms of the standard deviation, as a function of mean and variance, both of which are in polynomial basis.

\subsection{Trajectory Estimation}
Consider an ordered set of $n$ measurement times $T=\{t_1,\dots,t_n\}\in \mathcal{T}_O$.
At each time, a noisy measurement of position and velocity is generated for each dimension:
\begin{align*}
    x\in\mathbb{R}^{n}, \quad x'\in\mathbb{R}^{n}, \quad y\in\mathbb{R}^{n} \quad 
    \textnormal{and} \quad y'\in\mathbb{R}^{n}.
\end{align*}
The measurement vectors are augmented with the mean of the intention at the time $t_I$.
The augmented time vector then becomes $\bar{T}=\left[t_1,\dots,t_n,t_I\right]^\top$, and the augmented measurement vectors become
\begin{align*}
    \bar{x}&=[x^\top, \mu_{Ix}]^\top\in\mathbb{R}^{n+1}, 
    \\
    \bar{x'}&=[x'^\top, \mu_{Ix'}]^\top\in\mathbb{R}^{n+1}, 
    \\
    \bar{y}&=[x^\top, \mu_{Iy}]^\top\in\mathbb{R}^{n+1}, 
    \\
    \textnormal{and}  \quad
    \bar{y'}&=[y'^\top, \mu_{Iy'}]^\top\in\mathbb{R}^{n+1}.
\end{align*}
Next, this information is composed into two datasets of the form given in Equation~\eqref{eq:gp_prob}:
\begin{align*}
    D_x &= \{\bar{T},\,[\bar{x}^\top,\bar{x}'^\top]^\top\}, \quad \textnormal{and}
    \quad    D_y &= \{\bar{T},\,[\bar{y}^\top,\bar{y}'^\top]^\top\}.
\end{align*}
The posterior mean and covariance are then calculated using Equation~\eqref{eq:gp_prob}.
Numerical methods for calculating the matrix inverse can be found in~\cite{rasmussen06}.

\subsection{Estimation Basis}
\begin{lemma}
\label{th:1}
Consider the multi-output covariance function given in Equation~\eqref{eq:covF}:
\begin{align*}
    K =     
    \begin{bmatrix}
    k_f & k_f^\partial \\ ^\partial k_f & ^\partial k_f^\partial
    \end{bmatrix}.
\end{align*}
Given a data-set $D$, the elements of $k_f(\bar{T},t)$ and  $^\partial k _f(\bar{T},t)$, for any $t\in\mathcal{T}_{P}$, are polynomials of order less than or equal to three. 
\end{lemma}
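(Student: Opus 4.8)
The plan is to exploit the fact that the relative ordering of the measurement times and any prediction time is fixed, which collapses the piecewise (min/indicator) structure of the cubic-spline kernel and its derivatives into a single polynomial branch. Fix a test time $t\in\mathcal{T}_P=(t_b,t_I]$ and examine the entries of $k_f(\bar{T},t)$ and $^\partial k_f(\bar{T},t)$ one at a time. The augmented vector $\bar{T}=[t_1,\dots,t_n,t_I]^\top$ splits into the $n$ genuine measurement times $t_i\in\mathcal{T}_O=[t_a,t_b]$ and the single appended intention time $t_I$. The governing observation is that every measurement time satisfies $t_i\le t_b<t$, so $t_i<t$ and, after the shift, $\tilde{t}_i=t_i+\tau<t+\tau=\tilde{t}$.

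For a measurement entry I would substitute this ordering directly into the kernel~\eqref{eq:cubic}. Since $t_i<t$, the minimum is attained at the measurement time, so $\min(\tilde{t}_i,\tilde{t})=t_i+\tau$ is constant in $t$, while $|t_i-t|=t-t_i$ is affine in $t$; hence $k_f(t_i,t)$ is a constant plus an affine term, a polynomial of degree at most one. The same ordering applied to the cross-derivative kernel listed below Equation~\eqref{eq:mimo}, evaluated with first argument $t_i$ and second argument $t$, selects the term $\theta_f\theta_{f'}(t_i t - t_i^2/2)$, which is again affine in $t$.

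The appended entry at $t_I$ must be handled separately, because here $t\le t_I$, so the minimum is now attained at the test time: $\min(\tilde{t}_I,\tilde{t})=t+\tau$, which is no longer constant. Substituting into~\eqref{eq:cubic} produces a $(t+\tau)^3$ term of degree three together with a $\tfrac{1}{2}(t_I-t)(t+\tau)^2$ term that is a product of an affine and a quadratic factor, hence also degree three; the matching derivative entry reduces to $^\partial k_f(t_I,t)=\theta_f\theta_{f'}t^2/2$, of degree two. At the single endpoint $t=t_I$ the branches agree by continuity of the kernel, so these polynomial expressions remain valid on all of $\mathcal{T}_P$. Taking the maximum degree over all entries gives three, which is the claim.

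I expect the main obstacle to be organizational rather than analytical. Each kernel and each derivative is defined piecewise through a min or an indicator, and the only genuinely nontrivial step is recognizing that the appended intention time $t_I$ is the unique node for which the test time $t$ can lie on the opposite side of the split. That entry is precisely the one generating the degree-three terms, whereas all $n$ measurement entries contribute only affine polynomials, so the bookkeeping must keep these two cases carefully distinct.
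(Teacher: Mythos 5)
Your proof is correct and follows essentially the same route as the paper's: both arguments resolve the min and indicator functions entry-by-entry using the ordering $t_i \le t_b < t \le t_I$, distinguishing the $n$ measurement entries (which come out affine in $t$) from the appended intention entry (which produces the degree-three and degree-two terms). The only cosmetic difference is that the paper sets $\tau \equiv 0$ without loss of generality while you carry the shift through explicitly.
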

\begin{proof}
Without loss of generality, let $\tau\equiv0$, then
\begin{align*}
    k_f(\bar{T},t) &= \theta^2_f \left[\frac{1}{3} \textnormal{ min}^3(\bar{T},t) +\frac{1}{2}\left|\bar{T}-t\right|\textnormal{ min}^2(\bar{T},t)\right].
\end{align*}
To evaluate $\mathrm{min}(\tilde{T},t)$, recall that $T=[t_1,\dots,t_n]\subset\mathcal{T}_O=[t_a,t_b]$ and that $t\in\mathcal{T}_P=(t_b,t_I]$.
Then we can simplify the expression element-wise for the vector $\bar{T}$:
\begin{align*}
    &k_f(\bar{T},t) = 
    \\
    &\theta^2_f \left(\frac{1}{3} 
    \begin{bmatrix}
    t_1^3 \\ t_2^3 \\ \vdots \\ t_{n}^3 \\ t^3  
    \end{bmatrix} 
    +\frac{1}{2}
    \begin{bmatrix}
    t-t_1 \\ t-t_2 \\ \vdots \\ t-t_{n} \\ t_I-t  
    \end{bmatrix}
    \begin{bmatrix}
    t_1^2 \\ t_2^2 \\ \vdots \\ t_{n}^2 \\ t^2  
    \end{bmatrix}
    \right)
    .
\end{align*}
Note that vector multiplication here is element-wise.
Similarly,
\begin{align*}
    ^\partial k_f (\bar{T},t) &= \theta_f\theta_{f'}\left[\mathbb{I}(t<\bar{T})t^2/2 + \mathbb{I}(t\geq \bar{T})(\bar{T}-\bar{T}^2/2)\right],
\end{align*}
 and we can simplify this equality under the same assumptions to be 
\begin{align*}
    & ^\partial k_f(\bar{T},t) = 
    \\
    &\theta_{f}\theta_{f'} \left(\frac{1}{2} 
    \begin{bmatrix}
    0 \\ 0 \\ \vdots \\ 0 \\ t^2
    \end{bmatrix} 
    +
    \begin{bmatrix}
     t t_{1}-t_{1}^2/2 \\ t t_{2}-t_{2}^2/2 \\ \vdots \\ t t_{n}-t_{n}^2/2 \\ 0  
    \end{bmatrix}
    \right).
\end{align*}
Clearly each element of these arrays is a polynomial of order less than or equal to three.
\end{proof}
\begin{lemma}[Mean Basis for Position]
\label{th:2}
Consider the posterior mean computed in Equation~\eqref{eq:gp_prob} given a data-set D:
\begin{align*}
    M(t) &= K(\bar{T},t) ^\top ( K(\bar{T},\bar{T}) + \Sigma^2)^{-1} Y,
\end{align*}
and the chosen covariance function as defined in Equation~\eqref{eq:covF}:
\begin{align*}
    K =     
    \begin{bmatrix}
    k_f & k_f^\partial \\ ^\partial k_f & ^\partial k_f^\partial
    \end{bmatrix}.
\end{align*}
Then the expected position trajectory is a cubic polynomial for any $t\in\mathcal{T}_{P}$.
\end{lemma}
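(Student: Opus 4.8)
The plan is to reduce the claim entirely to \Cref{th:1}. I would start by writing the posterior mean for the position output explicitly as $M(t) = K(\bar{T},t)^\top P Y$, where $P = (K(\bar{T},\bar{T}) + \Sigma^2)^{-1}$ is the precision matrix. The key structural point is that $P$ and $Y$ do not depend on the test time $t$: the precision matrix is built only from the training times $\bar{T}$ and the measurement covariance $\Sigma^2$, and $Y$ is the augmented measurement vector. Hence $c := P Y$ is a fixed coefficient vector, and the expected position is a constant-coefficient linear combination $M(t) = K(\bar{T},t)^\top c$ of the entries of the cross-covariance $K(\bar{T},t)$, which is the only object depending on $t$.

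Next I would pin down which kernel blocks actually appear in the position component. Stacking the training outputs as positions followed by velocities, the cross-covariance is the $2(n+1)\times 2$ block matrix $K(\bar{T},t) = \left[\begin{smallmatrix} k_f(\bar{T},t) & k_f^\partial(\bar{T},t) \\ {}^\partial k_f(\bar{T},t) & {}^\partial k_f^\partial(\bar{T},t) \end{smallmatrix}\right]$, so that after transposing, the position (first) row of $K(\bar{T},t)^\top$ is exactly $\bigl[\, k_f(\bar{T},t)^\top \ \ {}^\partial k_f(\bar{T},t)^\top \,\bigr]$. These are precisely the two vectors whose entries \Cref{th:1} shows to be polynomials in $t$ of degree at most three for every $t \in \mathcal{T}_P$. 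Thus the expected position $\mu_f(t)$ is a linear combination, with the constant coefficients drawn from $c$, of such polynomials.

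The conclusion is then immediate: a finite linear combination of degree-$\le 3$ polynomials with constant coefficients is again a polynomial of degree at most three, so $M(t)$ restricted to its position component is a cubic polynomial on $\mathcal{T}_P$, as claimed. I do not expect a genuine obstacle here, since \Cref{th:1} carries the analytic weight and the remaining argument is linear-algebraic bookkeeping; the only care needed is to match the kernel blocks to the correct output and to track that the coefficient vector is $t$-independent. As a sanity check worth recording, the single cubic contribution to $\mu_f$ comes from the intention-augmented entry of $k_f(\bar{T},t)$ (the one evaluated against $t_I$), while the remaining position entries are affine in $t$ and the velocity block $^\partial k_f(\bar{T},t)$ contributes terms of degree at most two; this confirms that degree three is attained but never exceeded.
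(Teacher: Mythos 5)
Your proposal is correct and follows essentially the same route as the paper: both factor out the $t$-independent coefficient vector $PY$, identify the position component of $K(\bar{T},t)^\top$ as $\bigl[k_f(\bar{T},t)^\top \ \ {}^\partial k_f(\bar{T},t)^\top\bigr]$, and invoke \cref{th:1} to conclude that a constant-coefficient linear combination of degree-$\le 3$ polynomials is cubic. Your closing degree-bookkeeping remark (the cubic term arising only from the intention-augmented entry) is a correct refinement not present in the paper's proof, but the core argument is identical.
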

\begin{proof}
    Let $P$ denote the precision matrix, partitioned as follows:
    \begin{align*}
        P = ( K(\bar{T},\bar{T}) + \Sigma^2)^{-1} = 
        \begin{bmatrix}
            P_{11} &  P_{12} \\ P_{12}^\top & P_{22}
        \end{bmatrix} .
    \end{align*}
    Note that this matrix has no dependence on $t$.
    Combining the precision matrix with the vector of measurements yields
    \begin{align*}
        \mu(t) = 
        \begin{bmatrix}
        \mu_f(t) \\ \mu_{f'}(t)
        \end{bmatrix} 
        = K(\bar{T},t)^\top
        \begin{bmatrix}
            P_{11}Y_f +  P_{12}Y_{f'} \\ P_{12}^\top Y_f + P_{22}Y_{f'}
        \end{bmatrix}.
    \end{align*}
    Considering only the position mean, the matrix multiplication becomes
    \begin{align*}
        \mu_f(t) = 
        \begin{bmatrix}
        k_f(\bar{T},t) & ^\partial k_f(\bar{T},t)
        \end{bmatrix}
        \begin{bmatrix}
            P_{11}Y_f +  P_{12}Y_{f'} \\ P_{12}^\top Y_f + P_{22}Y_{f'}
        \end{bmatrix}.
    \end{align*}
    Notice that the mean is a linear combination of $k_f(\bar{T},t)$ and $^\partial k_f(\bar{T},t)$. Since a linear combination of polynomials of order less than and equal to order three is a cubic polynomial, we have that $\mu_f$ is a cubic polynomial.
\end{proof}
\begin{lemma}[Variance Basis for Position]
\label{th:3}
Consider the posterior variance computed in Equation~\eqref{eq:gp_prob}, given a data-set D:
\begin{align*}
    \sigma^2(t) & = K(t,t)
    - K(\bar{T},t) ^\top P K(\bar{T},t),
\end{align*}
and the chosen covariance function, $K$, as defined in Equation~\eqref{eq:covF}.
Then for any $t\in\mathcal{T}_{P}$, the position variance, $\sigma^2_f(t)$, is a sixth-order polynomial.
\end{lemma}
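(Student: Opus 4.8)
The plan is to read off $\sigma^2_f(t)$ as the $(1,1)$ entry of the $2\times2$ matrix $\sigma^2(t)$ and to control its two constituent terms separately, reusing the polynomial bounds of \cref{th:1}. The prior term contributes $[K(t,t)]_{11}=k_f(t,t)$, while the data-dependent term contributes the $(1,1)$ entry of the quadratic form $K(\bar{T},t)^\top P K(\bar{T},t)$, with $P$ constant in $t$.

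First I would evaluate the prior term and isolate the relevant part of the quadratic form. Taking $\tau\equiv0$ as in \cref{th:1}, the absolute-value factor in~\eqref{eq:cubic} vanishes when both arguments equal $t$, so $k_f(t,t)=\tfrac{1}{3}\theta_f^2\, t^3$, a cubic. Following the block partition of $K(\bar{T},t)$ used in \cref{th:2}, the column coupling the training data to the predicted \emph{position} output is $v(t)\coloneqq[\,k_f(\bar{T},t)^\top,\ {}^\partial k_f(\bar{T},t)^\top\,]^\top\in\mathbb{R}^{2(n+1)}$, so the $(1,1)$ entry of $K(\bar{T},t)^\top P K(\bar{T},t)$ is exactly the scalar $v(t)^\top P\, v(t)$. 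Assembling the pieces gives $\sigma^2_f(t)=\tfrac{1}{3}\theta_f^2 t^3 - v(t)^\top P\, v(t)$.

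The degree count is then immediate. By \cref{th:1} every entry of $v(t)$ is a polynomial in $t$ of order at most three, and since $P$ carries no $t$-dependence, $v(t)^\top P\, v(t)=\sum_{i,j}P_{ij}\,v_i(t)\,v_j(t)$ is a finite sum of products of two such polynomials, each of order at most six. Subtracting this from the cubic prior term, whose order is strictly lower, leaves $\sigma^2_f(t)$ of order at most six.

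The main obstacle I anticipate is not the arithmetic but the bookkeeping of the multi-output block structure: confirming that only the position column $v(t)$, and not the velocity-cross column $[\,k_f^\partial(\bar{T},t)^\top,\ {}^\partial k_f^\partial(\bar{T},t)^\top\,]^\top$, enters the $(1,1)$ entry. Once that extraction is justified the result follows; the only caveat worth a remark is that the order-six coefficient is generically nonzero, so the bound is tight and ``sixth-order'' should be read as order at most six.
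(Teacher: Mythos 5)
Your proposal is correct and follows essentially the same route as the paper: both extract the $(1,1)$ block of the posterior covariance, note that $k_f(t,t)$ is cubic, and invoke Lemma~\ref{th:1} to bound the quadratic form $K(\bar{T},t)^\top P K(\bar{T},t)$ by products of degree-at-most-three polynomials, giving degree at most six. Your compact $v(t)^\top P v(t)$ notation and the explicit caveat that ``sixth-order'' means order at most six are minor presentational refinements of the paper's block-partitioned computation, not a different argument.
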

\begin{proof}
    The variance function for position is given by the first block in the posterior covariance function:
    \begin{align*}
        \sigma^2_{f}(t)&=k_f(t,t) 
        \\&- k_f(\bar{T},t) \left(k_f(\bar{T},t) P_{11} +^\partial k_f(\bar{T},t) P_{12} \right)
        \\&+^\partial k_f(\bar{T},t) \left(k_f(\bar{T},t)  P_{12}^\top + ^\partial k_f(\bar{T},t) P_{22} \right).
    \end{align*}
    Clearly, $k_f(t,t)$ is cubic by substitution of $t$ into Equation~\eqref{eq:cubic}. By Lemma~\ref{th:1}, the elements of both $k_f(\bar{T},t)$ and $^\partial k_f(\bar{T},t)$ are polynomials. Since the posterior variance function is a product of these polynomial elements with a linear combination of these elements, the resulting function will be a sixth-order polynomial. 
\end{proof}
\begin{theorem}[Uncertainty Region Boundary Basis]
\label{th:4}
The boundary of the obstacle uncertainty region over the prediction time interval, $\mathcal{T}_P$, is given by the sum of a third-order polynomial and the square root of a sixth-order polynomial:
\begin{align}
        C_x(t) = \mu_x(t) \pm 2\sqrt{\sigma^2_x(t)}, \quad
        C_y(t) = \mu_y(t) \pm 2\sqrt{\sigma^2_y(t)}.   
        \label{eq:confidence}
\end{align}
\end{theorem}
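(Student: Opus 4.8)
The plan is to read the boundary curves $C_x$ and $C_y$ directly off \cref{def:ur} and then substitute the polynomial characterizations established in the preceding lemmas. First I would note that the uncertainty region $\Psi_x^{2\sigma}(t)$ collects the points $x$ whose deviation from the mean is bounded in magnitude by $|2\sigma_x(t)|$. For each fixed $t \in \mathcal{T}_P$, the boundary of this region consists of the two points where that bound is attained with equality, i.e. where $\mu_x(t) - x = \pm |2\sigma_x(t)|$. Solving for $x$ gives $x = \mu_x(t) \mp 2\sigma_x(t)$, so the boundary is parametrized by $C_x(t) = \mu_x(t) \pm 2\sqrt{\sigma_x^2(t)}$, and identically for the $y$ component. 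This step reduces the theorem to identifying the functional form of each summand.

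Next I would invoke \cref{th:2}, which guarantees that the posterior mean $\mu_x(t)$ (and likewise $\mu_y(t)$) is a cubic polynomial on $\mathcal{T}_P$; this furnishes the third-order term. For the remaining summand I would appeal to \cref{th:3}, which establishes that the position variance $\sigma_x^2(t)$ (respectively $\sigma_y^2(t)$) is a sixth-order polynomial on the same interval. Writing $2\sqrt{\sigma_x^2(t)} = \sqrt{4\sigma_x^2(t)}$, and observing that scaling a degree-six polynomial by the nonzero constant $4$ preserves its degree, shows that this term is exactly the square root of a sixth-order polynomial. Combining the two, each boundary curve is the sum of a third-order polynomial and the square root of a sixth-order polynomial, which is the form asserted in \eqref{eq:confidence}. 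Because the boundary characterization and both lemmas hold pointwise for every $t \in \mathcal{T}_P$, the conclusion holds over the entire prediction interval.

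The analytic content has already been discharged by \cref{th:2} and \cref{th:3}, so this theorem is essentially an assembly step rather than a fresh computation, and I do not expect a substantive obstacle. The one point requiring care is the passage from the set-builder definition to the two-sided boundary: as written, the inequality in \cref{def:ur} involves $\mu_x(t) - x$, so I would interpret the magnitude bound on the right-hand side as inducing the symmetric region whose two boundary branches are captured by the $\pm$ in \eqref{eq:confidence}. Confirming that the factor of $2$ is absorbed under the square root without changing the polynomial degree is the only other detail, and it is immediate.
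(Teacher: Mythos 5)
Your proposal is correct and follows essentially the same route as the paper: both invoke \cref{th:2} for the cubic mean and \cref{th:3} for the sixth-order variance, then assemble the boundary as in \eqref{eq:confidence}. The extra care you take in extracting the two boundary branches from \cref{def:ur} and in noting that the factor of $2$ does not change the polynomial degree is a harmless refinement of the paper's shorter argument.
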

\begin{proof}
    By Lemma~\ref{th:2} we have that the posterior mean $\mu_f$ over the interval $\mathcal{T}_P$ is a cubic polynomial. By Lemma~\ref{th:3} we have that the variance, $\sigma^2_f$, is a sixth-order polynomial. We construct the confidence interval in each dimension by adding the square root of the variance and mean functions as shown in Equation~\eqref{eq:confidence}. Therefore we have a confidence interval described by a known basis.
\end{proof}

\begin{figure*}[t!]
    \centering
    \subfloat[$t=1.25$s]{\label{fig:merging_seq1}\includegraphics[width=0.5\columnwidth]{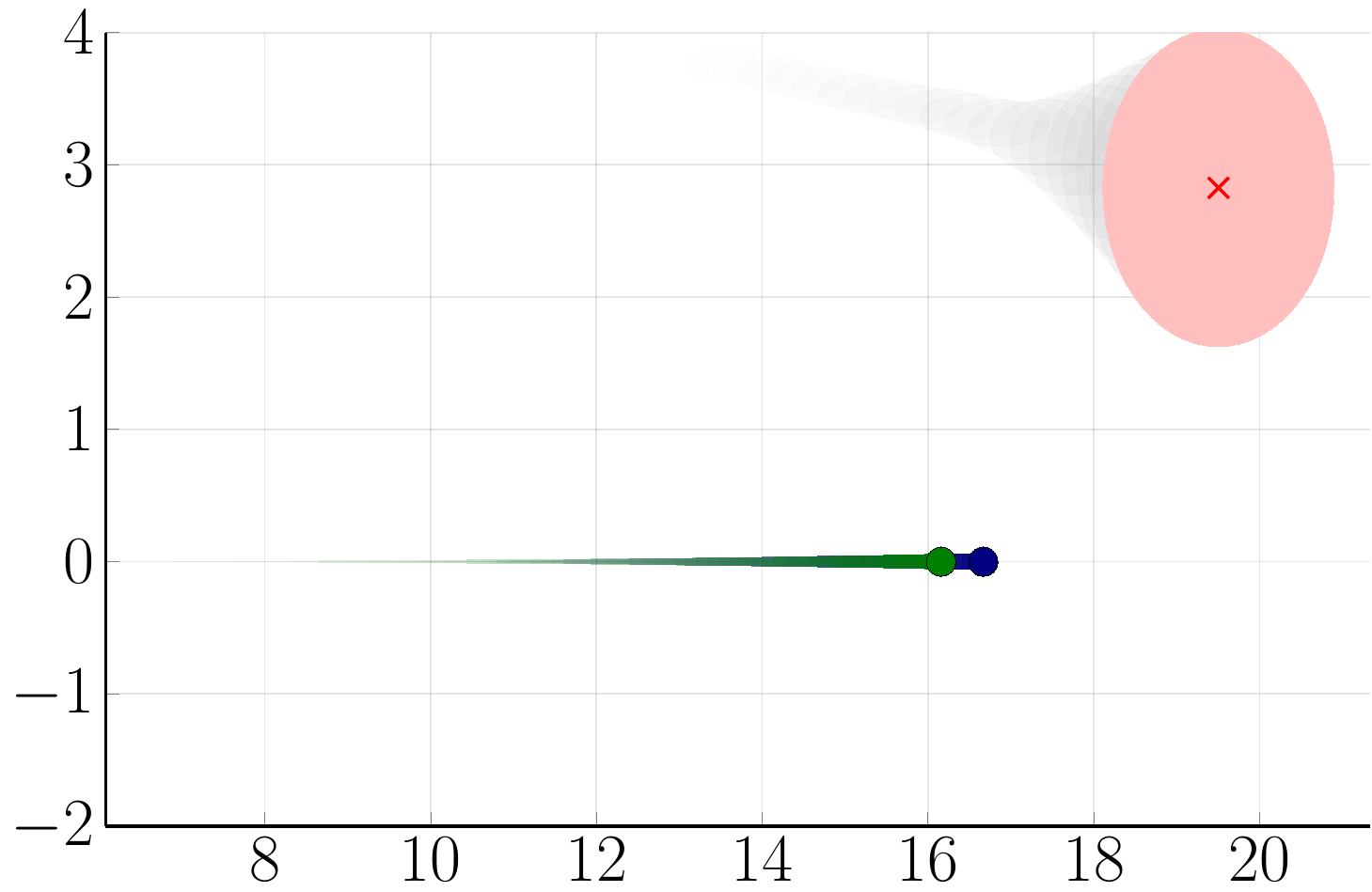}}
    \subfloat[$t=1.75$s]{\label{fig:merging_seq2}\includegraphics[width=0.5\columnwidth]{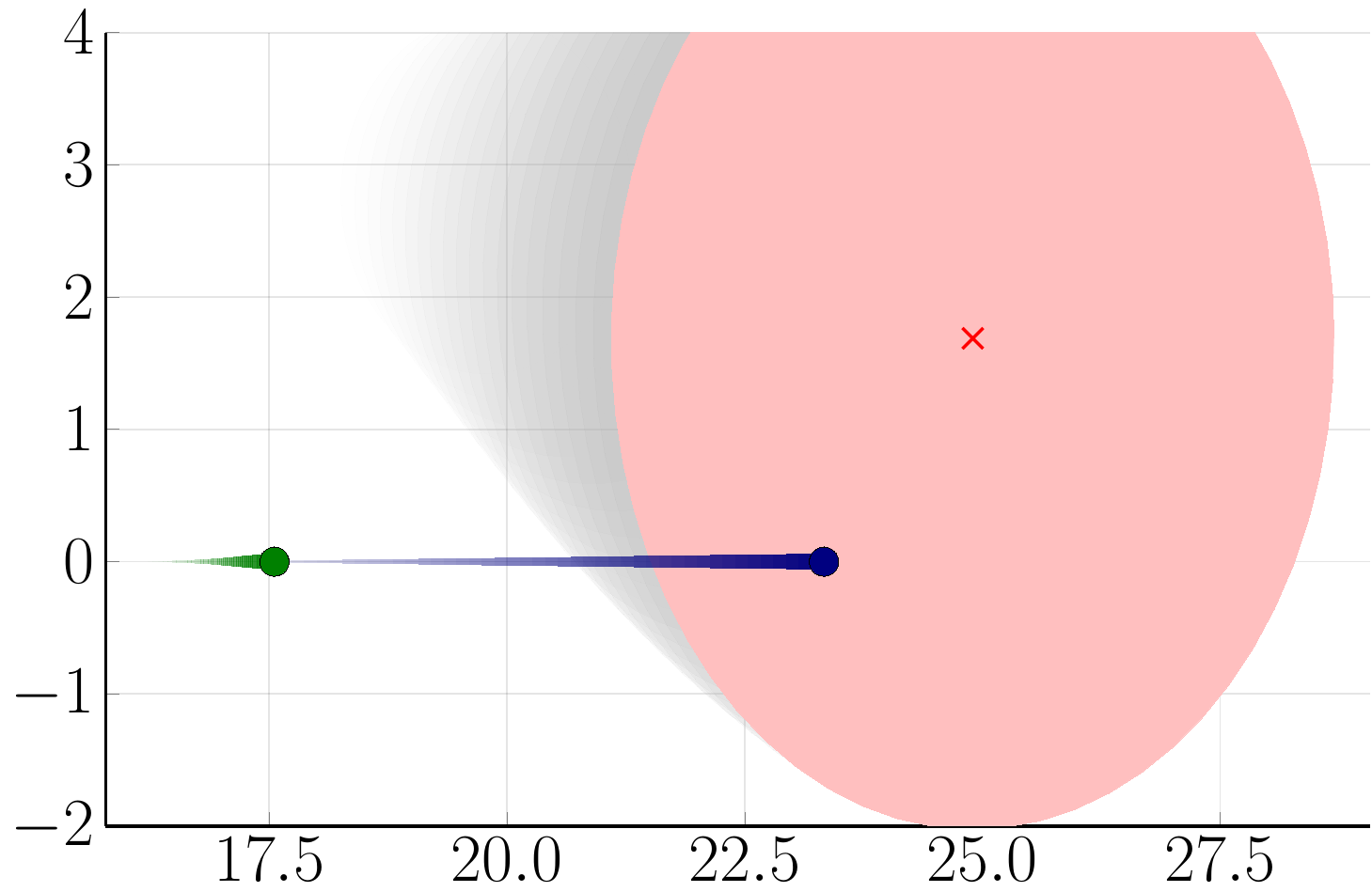}} 
    \subfloat[$t=2.25$s]{\label{fig:merging_seq3}\includegraphics[width=0.5\columnwidth]{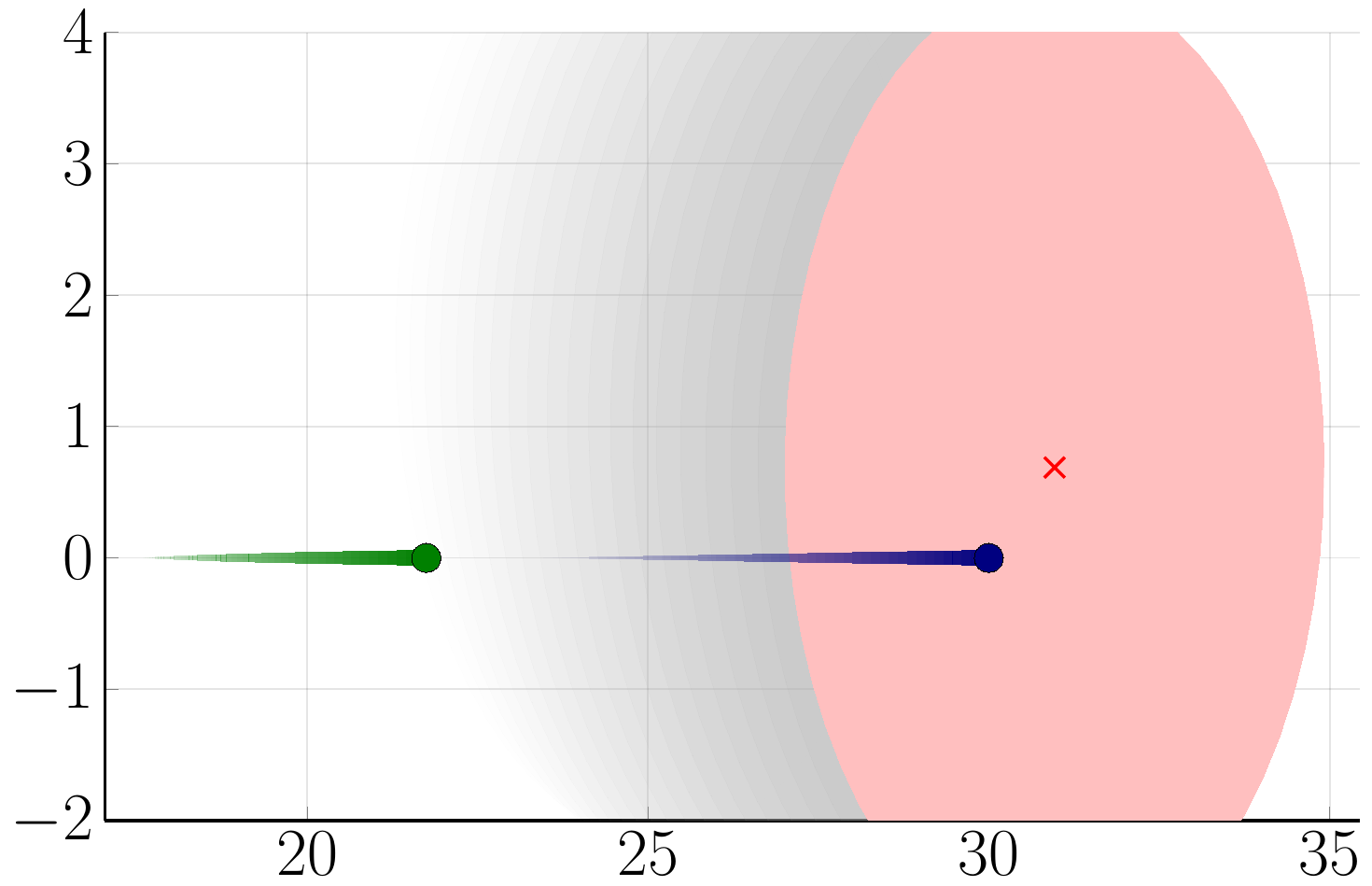}} 
    \subfloat[$t=2.75$s]{\label{fig:merging_seq4}\includegraphics[width=0.5\columnwidth]{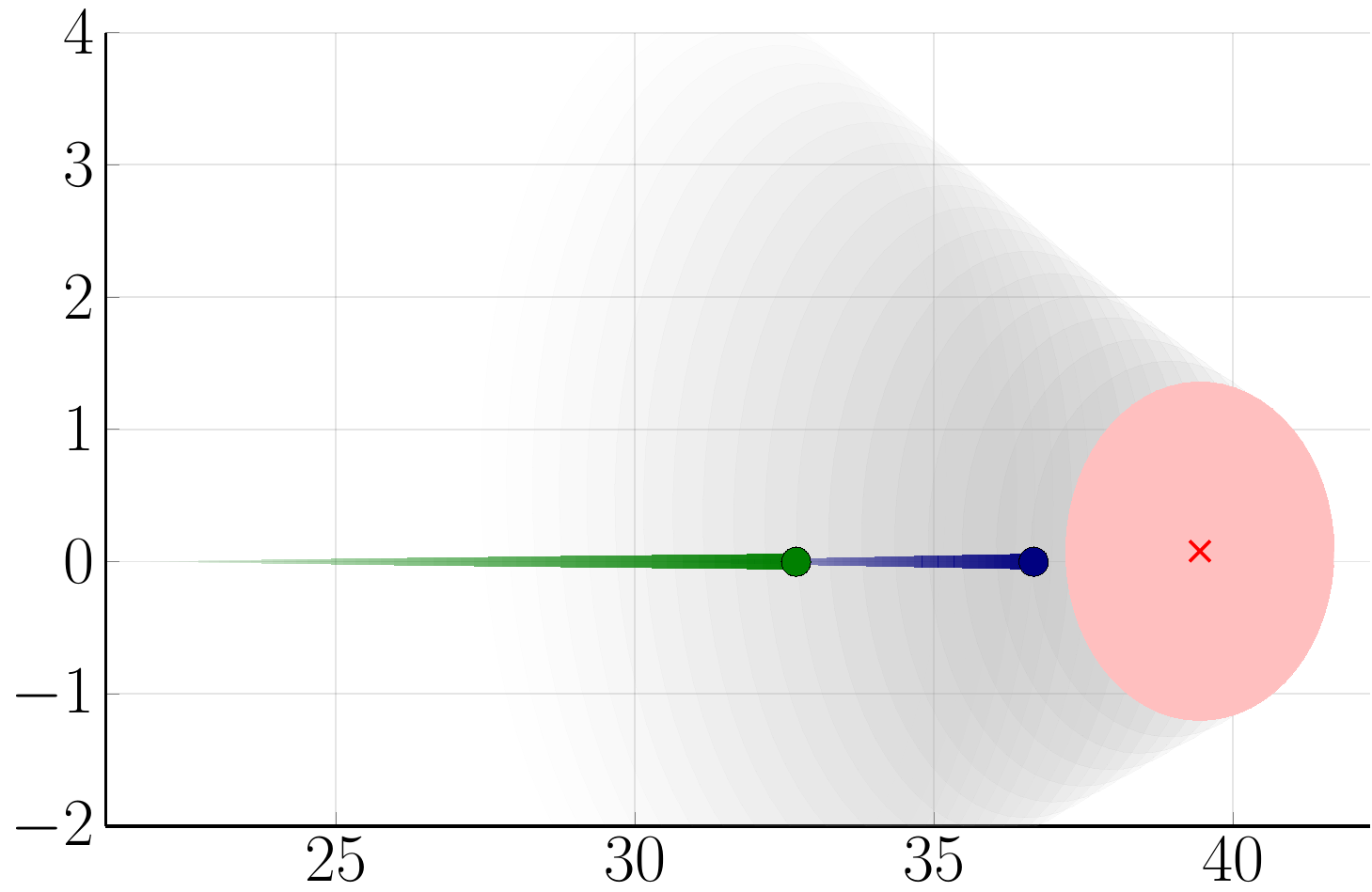}} 
    \caption{Simulation of first scenario shown at different times. The expected obstacle location is shown as a red ``x" with the uncertainty region of the obstacle shown as a red ellipse. The blue and green points show two possible agent locations at that time with tails showing their past trajectory.}
    \label{fig:merging1}
\end{figure*}

\begin{figure}[htbp]
    \centering
    \subfloat[Obstacle and agent trajectories in $x$-$y$ plane.]{\label{fig:merging_xy}\includegraphics[width=\columnwidth]{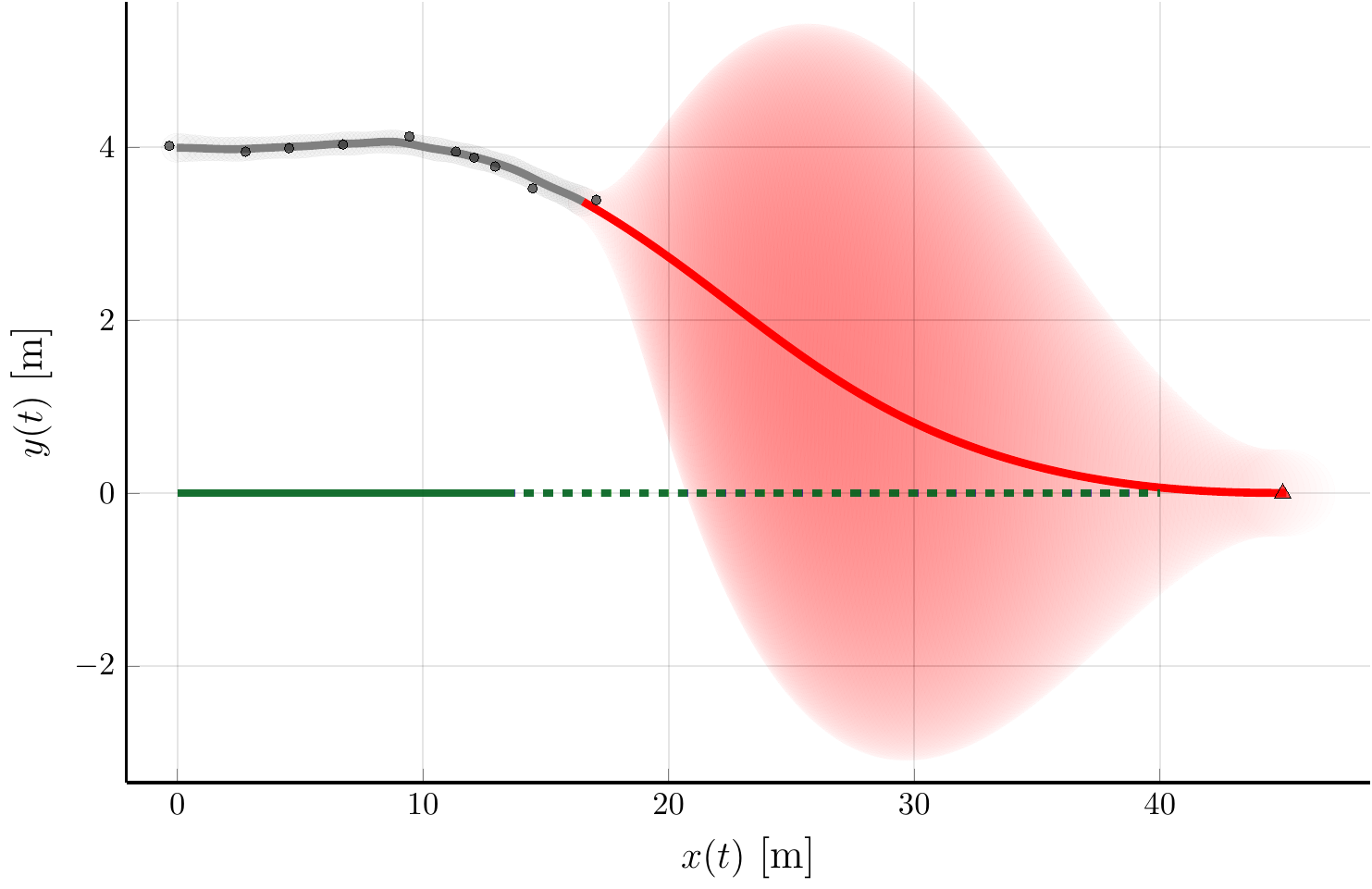}} \\
    \subfloat[Time history of trajectories\newline in $x$ dimension.]{\label{fig:merging_x}\includegraphics[width=0.5\columnwidth]{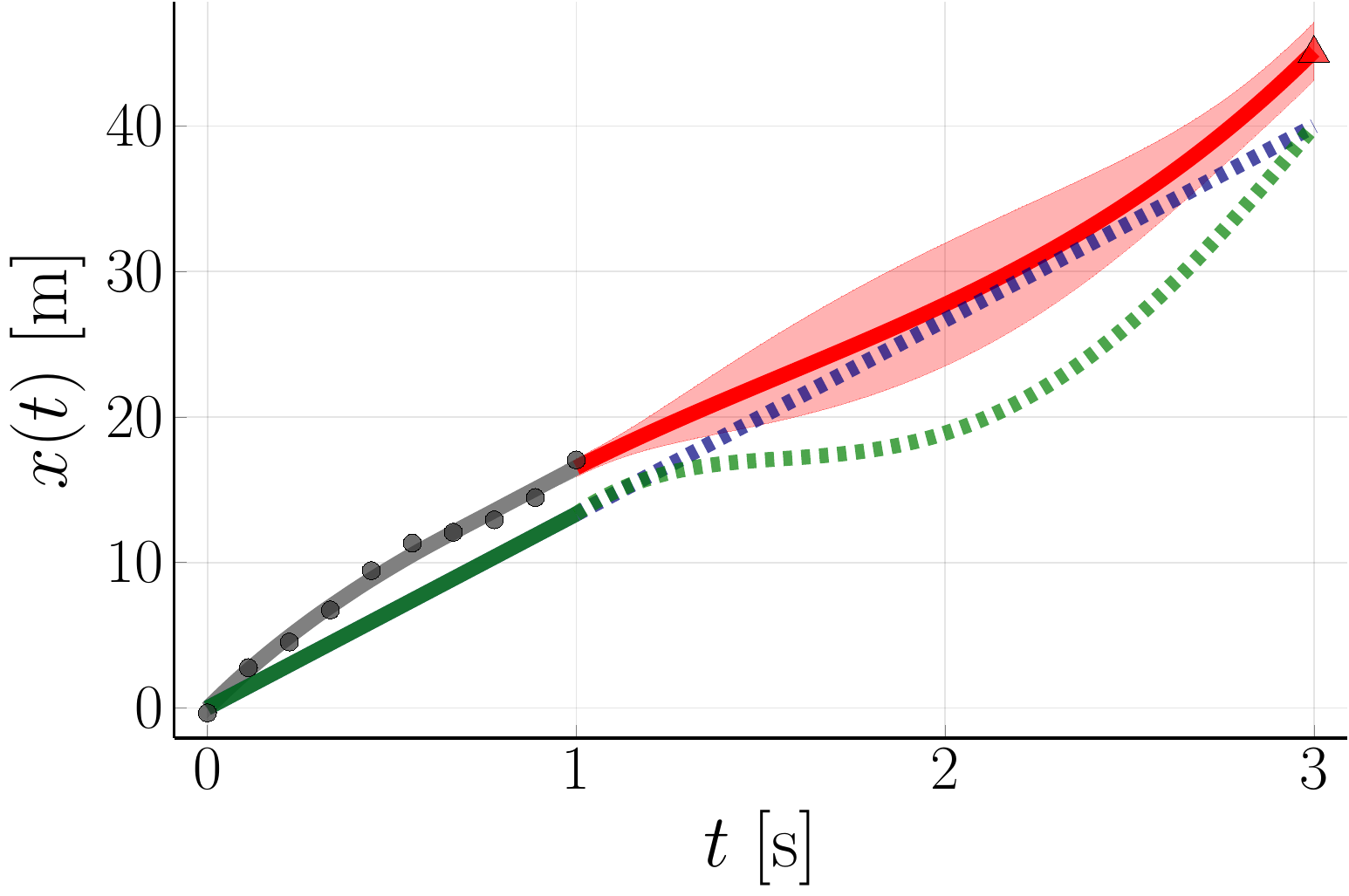}} 
    \subfloat[Time history of trajectories\newline in $y$ dimension.]{\label{fig:merging_y}\includegraphics[width=0.5\columnwidth]{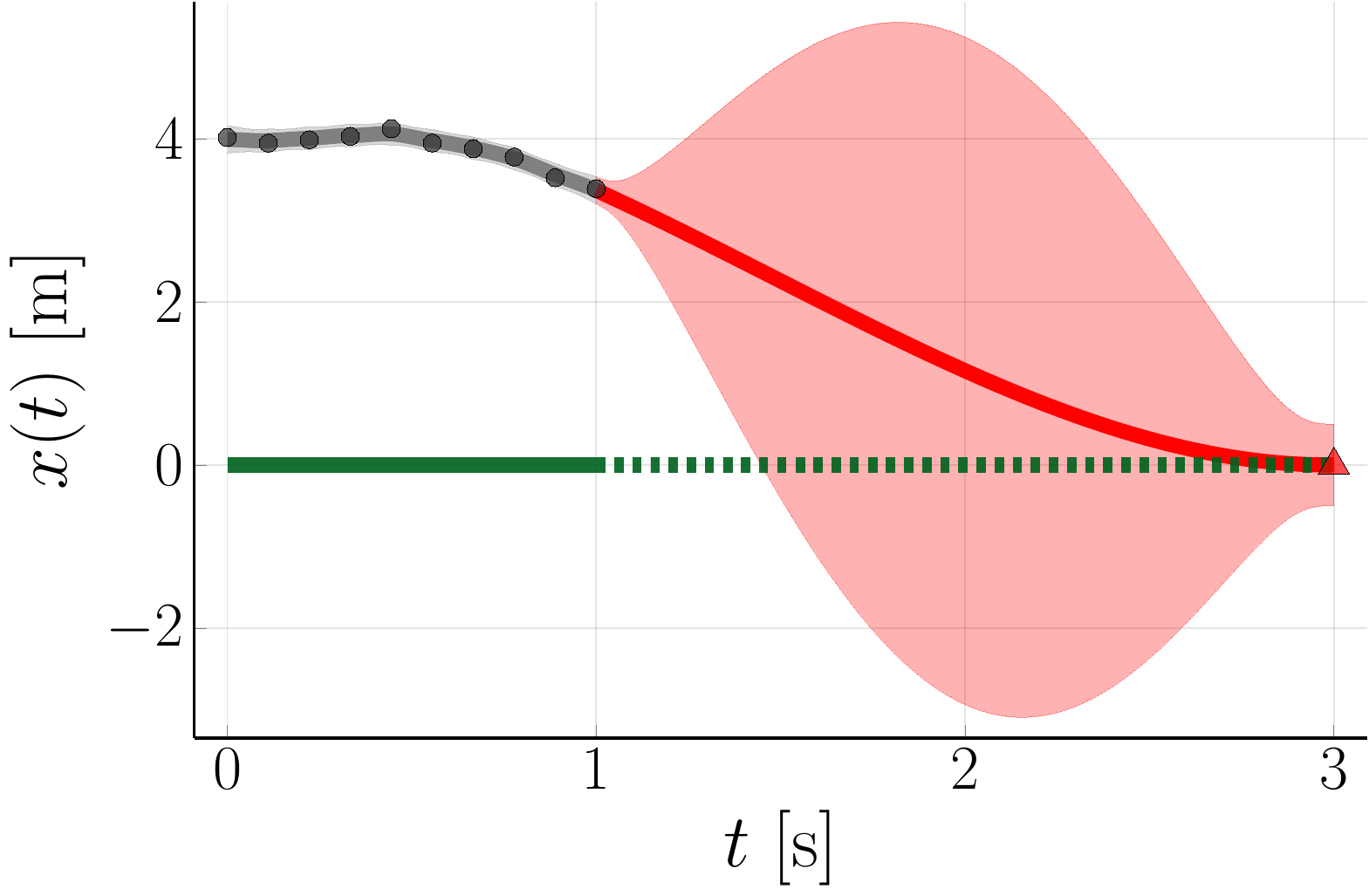}} \\
    \subfloat[Minimum distance for each proposed trajectory to uncertainty region over prediction time horizon.]{\label{fig:merging_d}\includegraphics[width=\columnwidth]{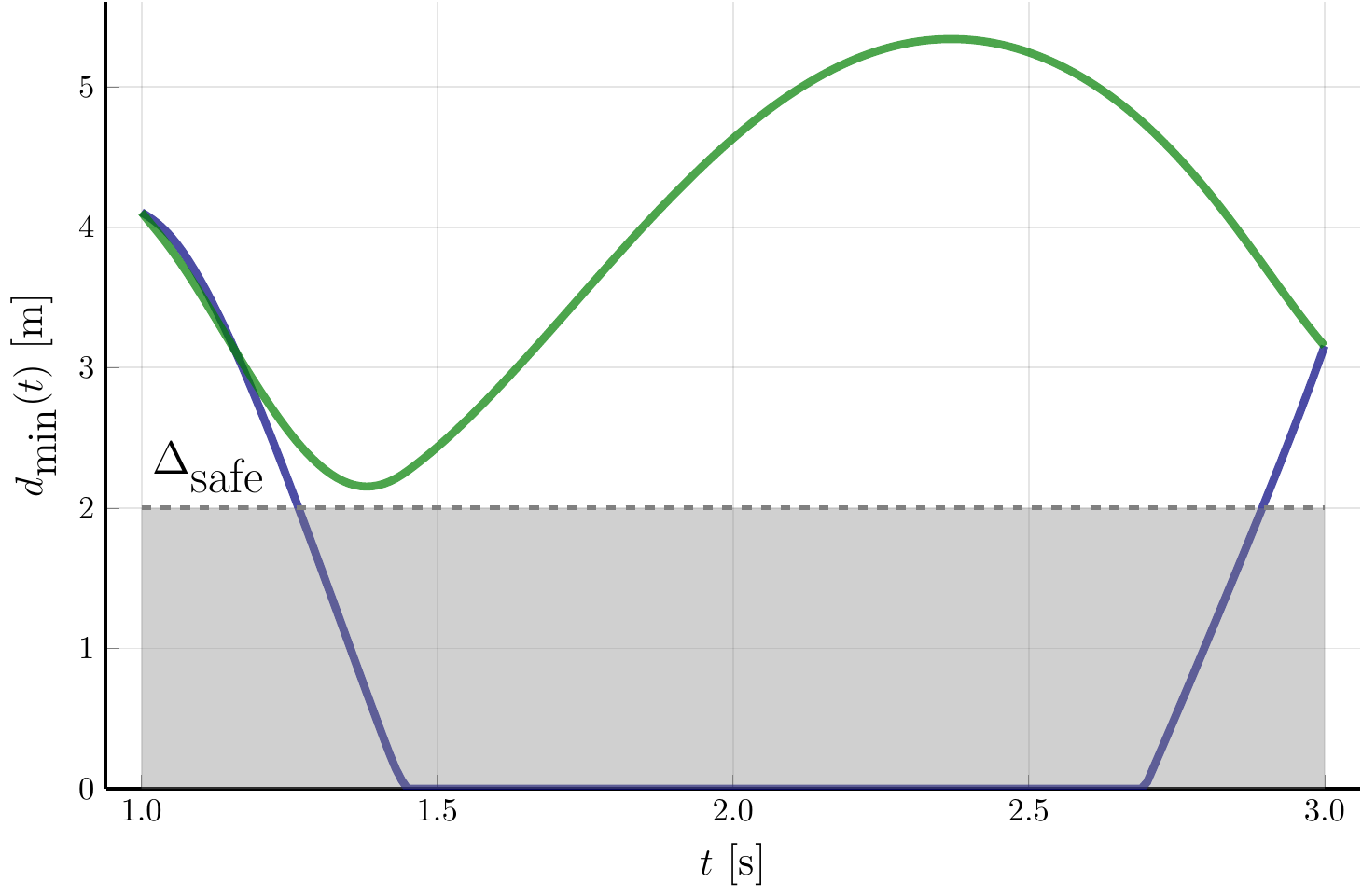}} 
\caption{Simulation of the first scenario.}
    \label{fig:merging2}
\end{figure}

\subsection{Parameterization}
To predict the future position of the vehicle, the matrix $K(T,t)$ would need to be sampled or converted into a parametric form. Since a third-order polynomial can be fit with four samples and a sixth-order polynomial with seven samples, we sample the mean and covariance functions at uniformly spaced times $\mathbf{T}_\mu\subset \mathcal{T}_P$ and $\mathbf{T}_\sigma\subset \mathcal{T}_P$.
These polynomials are exactly equivalent to the mean and variance functions, for $t\in\mathcal{T}_P$, but are renamed to indicate their form:
\begin{align}
\label{eq:param}
    &\psi_{\mu x}(t)= \mu_x(t),
    \quad \psi_{\mu y}(t)= \mu_y(t), \nonumber
    \\
    &\psi_{\sigma^2 x}(t)= \sigma^2_x(t) \quad \textnormal{and}
    \quad\psi_{\sigma^2 y}(t)= \sigma^2_y(t).
\end{align}

\subsection{Collision Prediction}
We employ the collision prediction methods described in \cite{arun19} to find the intersection between the boundary of the uncertainty region, given in \cref{eq:confidence}, and the agent's parametric trajectory, \cref{eq:param}. This method is computationally efficient and relies on interval optimization techniques to find the point intersection without sacrificing accuracy.  Note that the nature of global optimization methods only allows for detecting one intersection even if the parametric curves intersect several times. However, since trajectory estimation is only performed for short time horizons, we make a reasonable assumption that the agent's trajectory only intersects the uncertainty region once. Using this framework, we obtain the interval $\mathcal{T}_x^C \subset \mathcal{T}_P$, which indicates the time interval of collision in the $x$ dimension, so that 
\begin{align*}
    d_\textrm{min}(t, \psi_x, \Psi_x^{2\sigma}) < \Delta_\textrm{safe}
\end{align*}
holds true for all $t \in \mathcal{T}_x^C$, and $\mathcal{T}_y^C \subset \mathcal{T}_P$ indicates the time interval of collision in the $y$ dimension, so that
\begin{align*}
    d_\textrm{min}(t, \psi_y, \Psi_y^{2\sigma}) < \Delta_\textrm{safe} 
\end{align*}
holds  for all $t \in \mathcal{T}_y^C$. When $\mathcal{T}_x^C \cap \mathcal{T}_y^C = \{\varnothing\}$, then it is easy to see that either $d_\textrm{min}(t, \psi_x, \Psi_x^{2\sigma}) \ge \Delta_\textrm{safe}$ or $d_\textrm{min}(t, \psi_y, \Psi_y^{2\sigma}) \ge \Delta_\textrm{safe}$ for all $t \in \mathcal{T}_P$, which implies that \cref{eq:prob} is not violated.

\begin{figure*}[t]
    \centering
    \subfloat[$t=1.25$s]{\label{fig:crossing_seq1}\includegraphics[width=0.5\columnwidth]{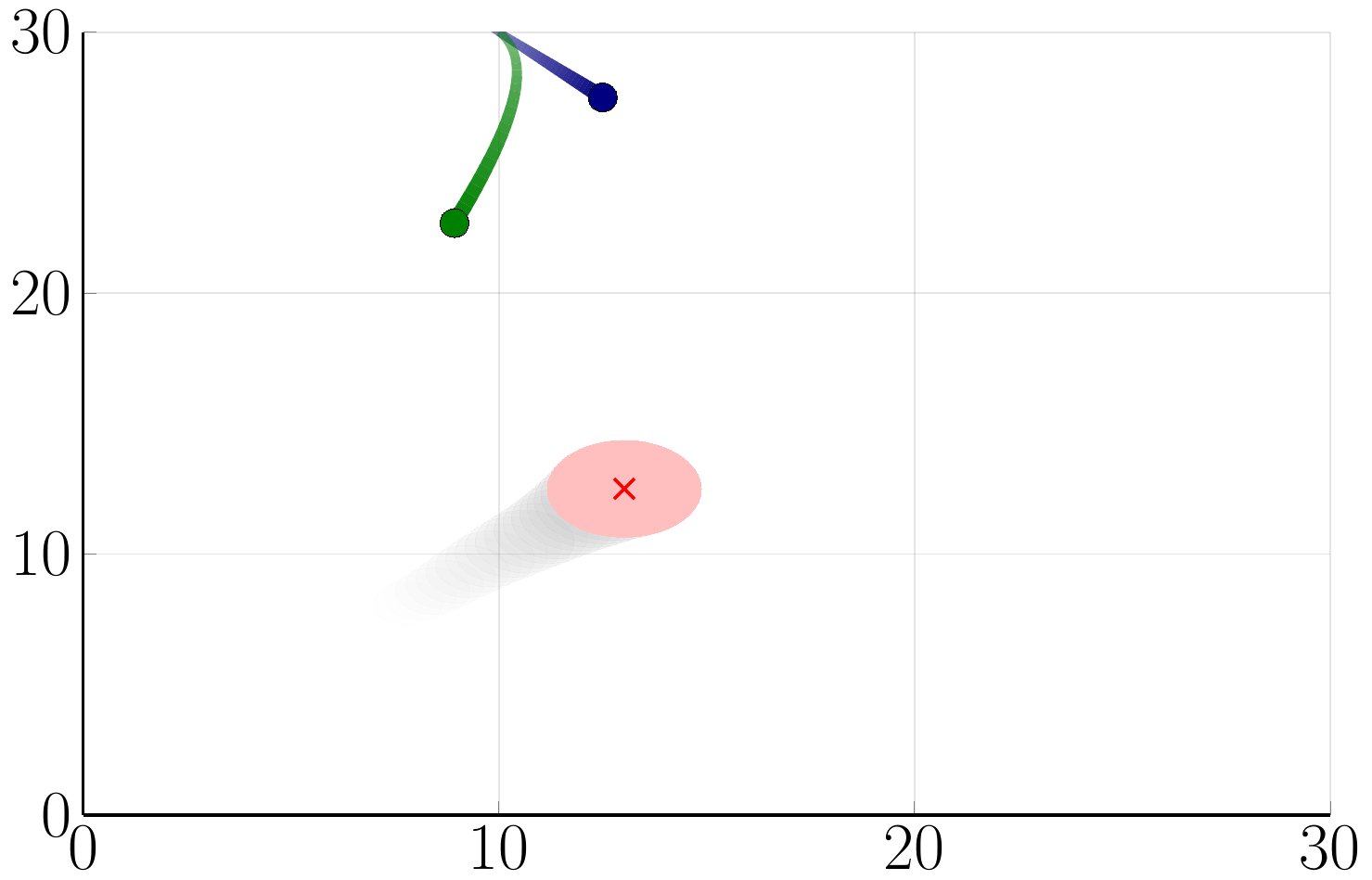}}
    \subfloat[$t=1.75$s]{\label{fig:crossing_seq2}\includegraphics[width=0.5\columnwidth]{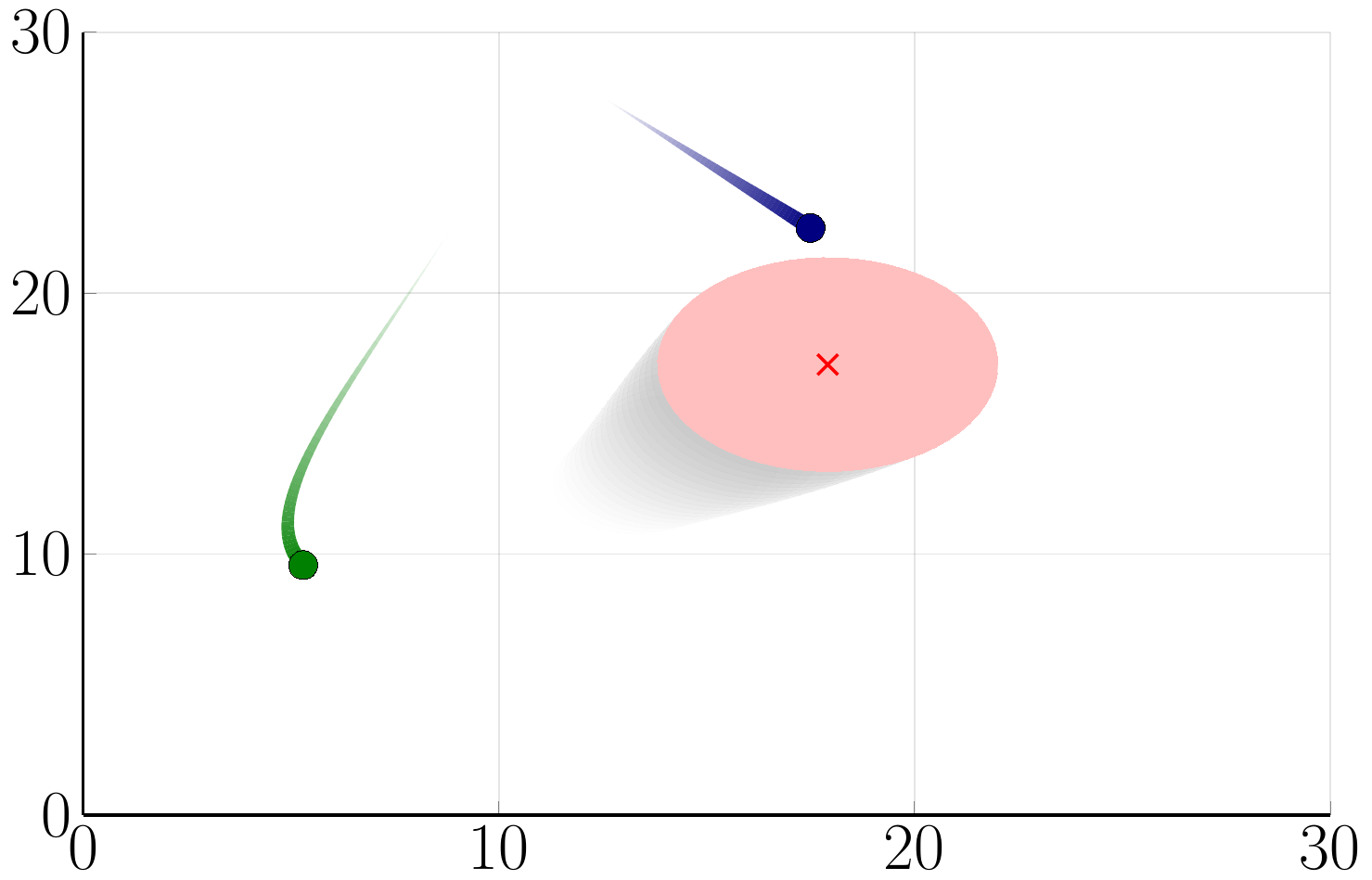}} 
    \subfloat[$t=2.25$s]{\label{fig:crossing_seq3}\includegraphics[width=0.5\columnwidth]{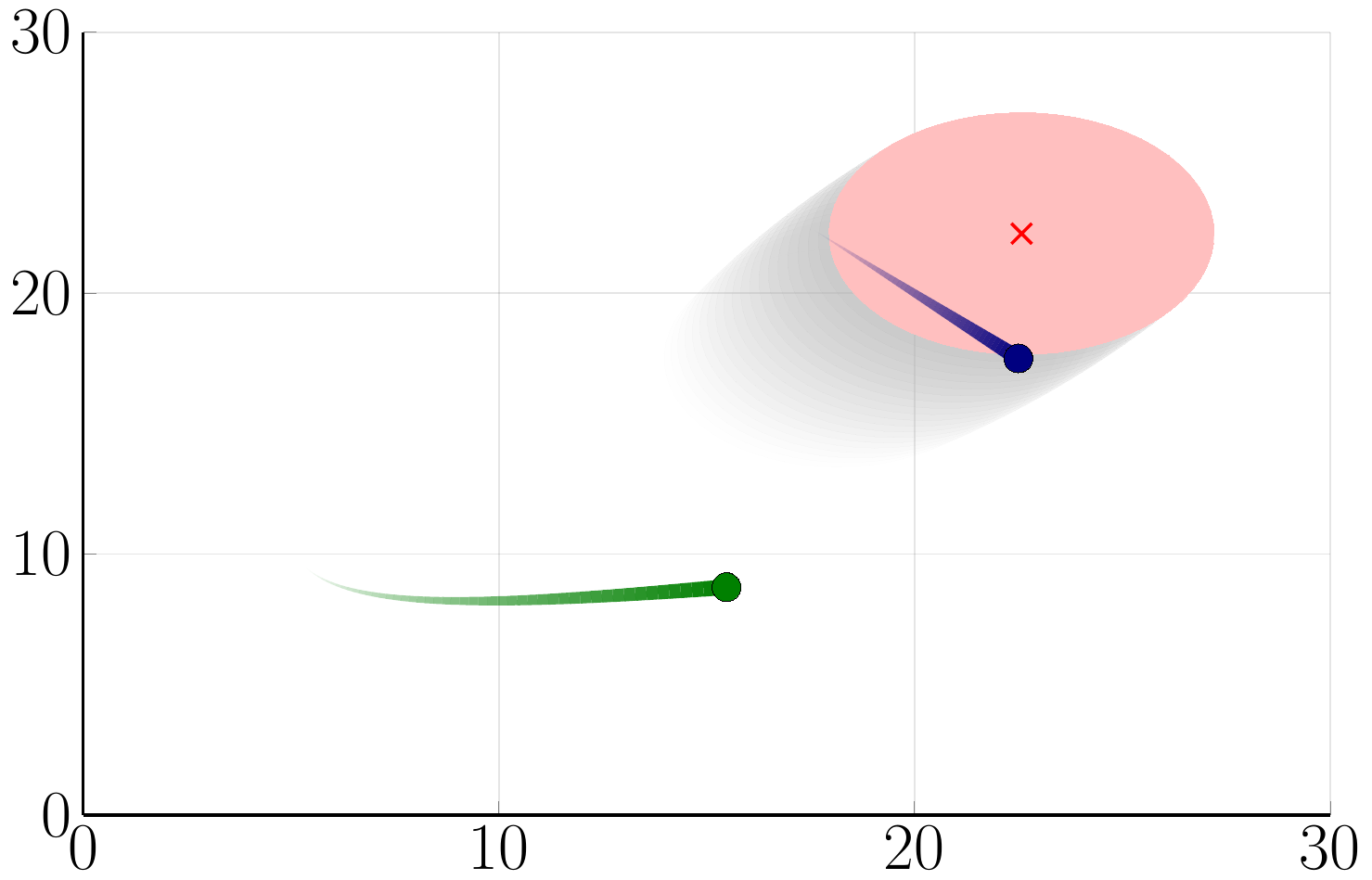}} 
    \subfloat[$t=2.75$s]{\label{fig:crossing_seq4}\includegraphics[width=0.5\columnwidth]{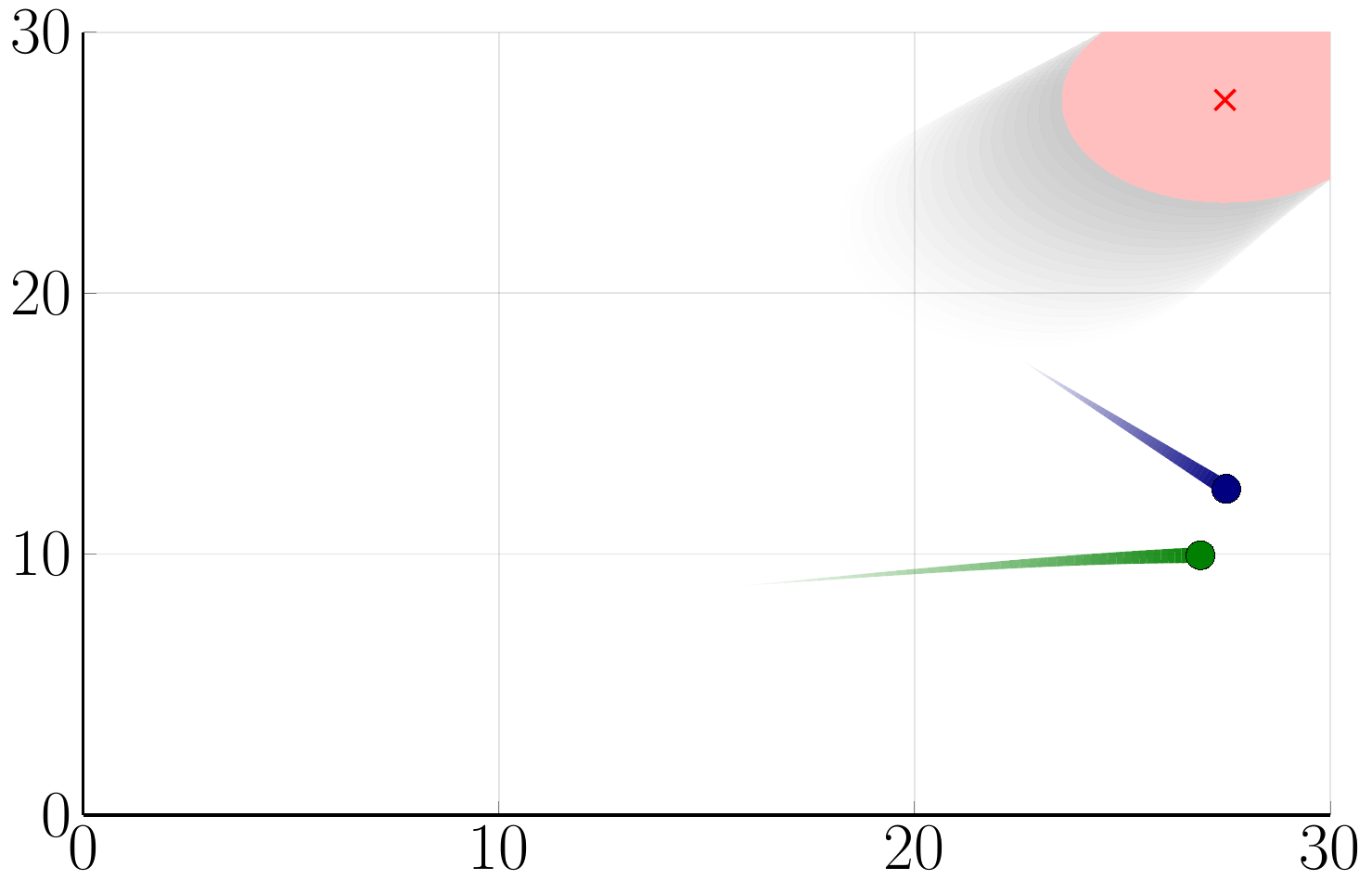}} 
    \caption{Simulation of second scenario shown at different times. The expected obstacle location is shown as a red ``x" with the uncertainty region of the obstacle shown as a red ellipse. The blue and green points show two possible agent locations at that time with tails showing their past trajectory.}
    \label{fig:crossing1}
\end{figure*}

\begin{figure}[htbp]
    \centering
    \subfloat[Obstacle and agent trajectories in $x$-$y$ plane.]{\label{fig:crossing_xy}\includegraphics[width=\columnwidth]{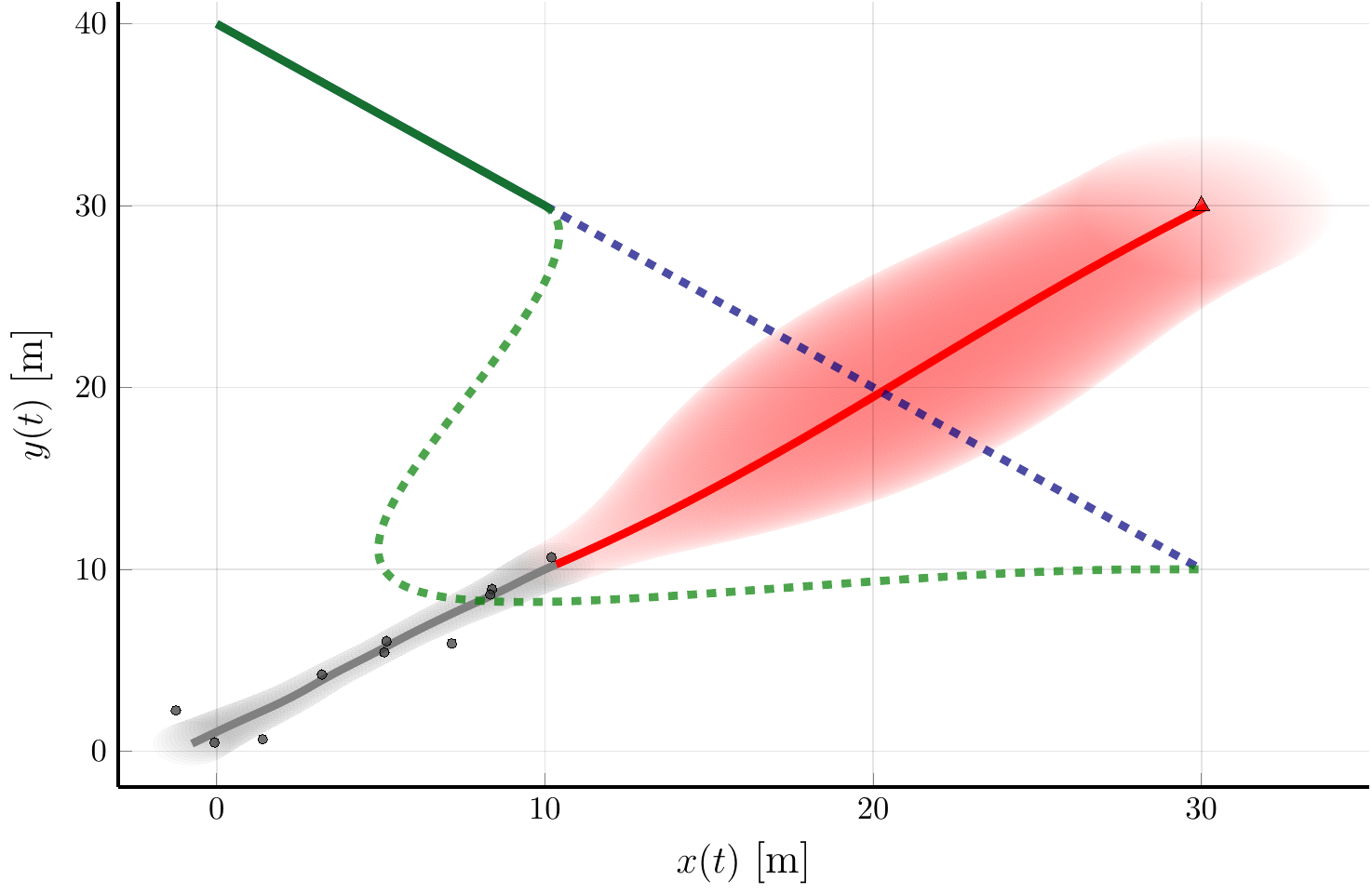}} \\
    \subfloat[Time history of trajectories\newline in $x$ dimension.]{\label{fig:crossing_x}\includegraphics[width=0.5\columnwidth]{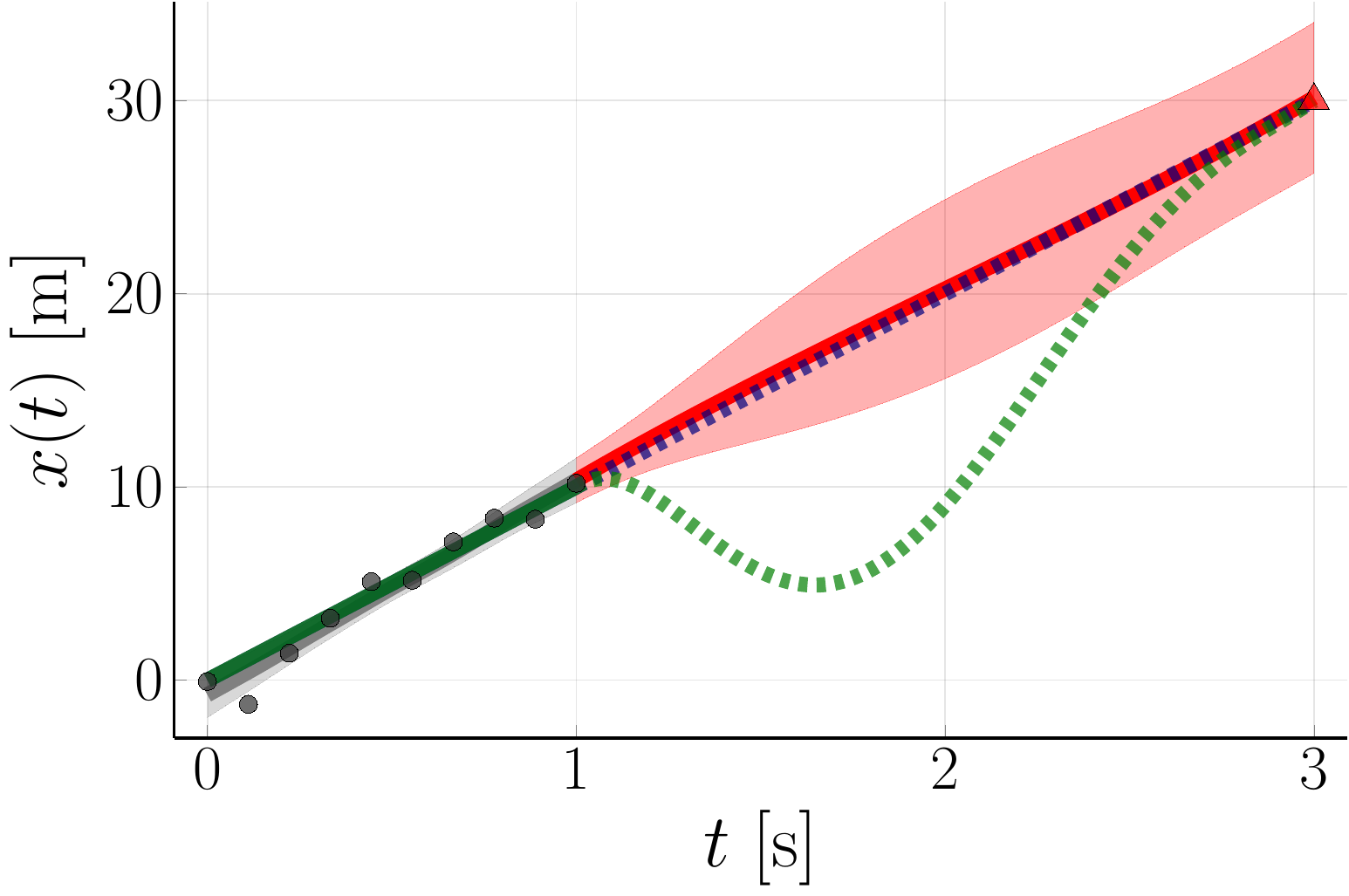}}
    \subfloat[Time history of trajectories\newline in $y$ dimension.]{\label{fig:crossing_y}\includegraphics[width=0.5\columnwidth]{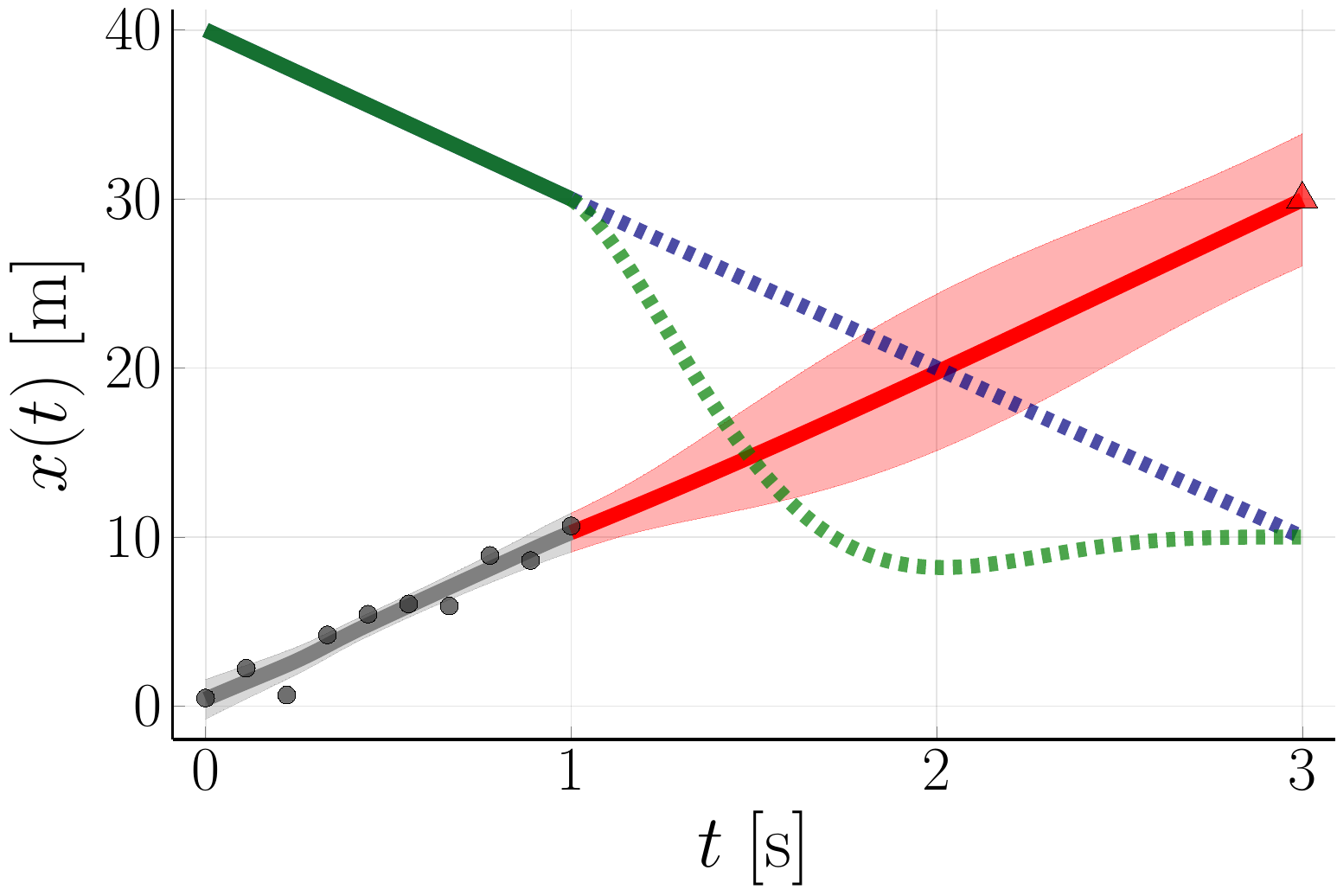}} \\
    \subfloat[Minimum distance for each proposed trajectory to uncertainty region over prediction time horizon.]{\label{fig:crossing_d}\includegraphics[width=\columnwidth]{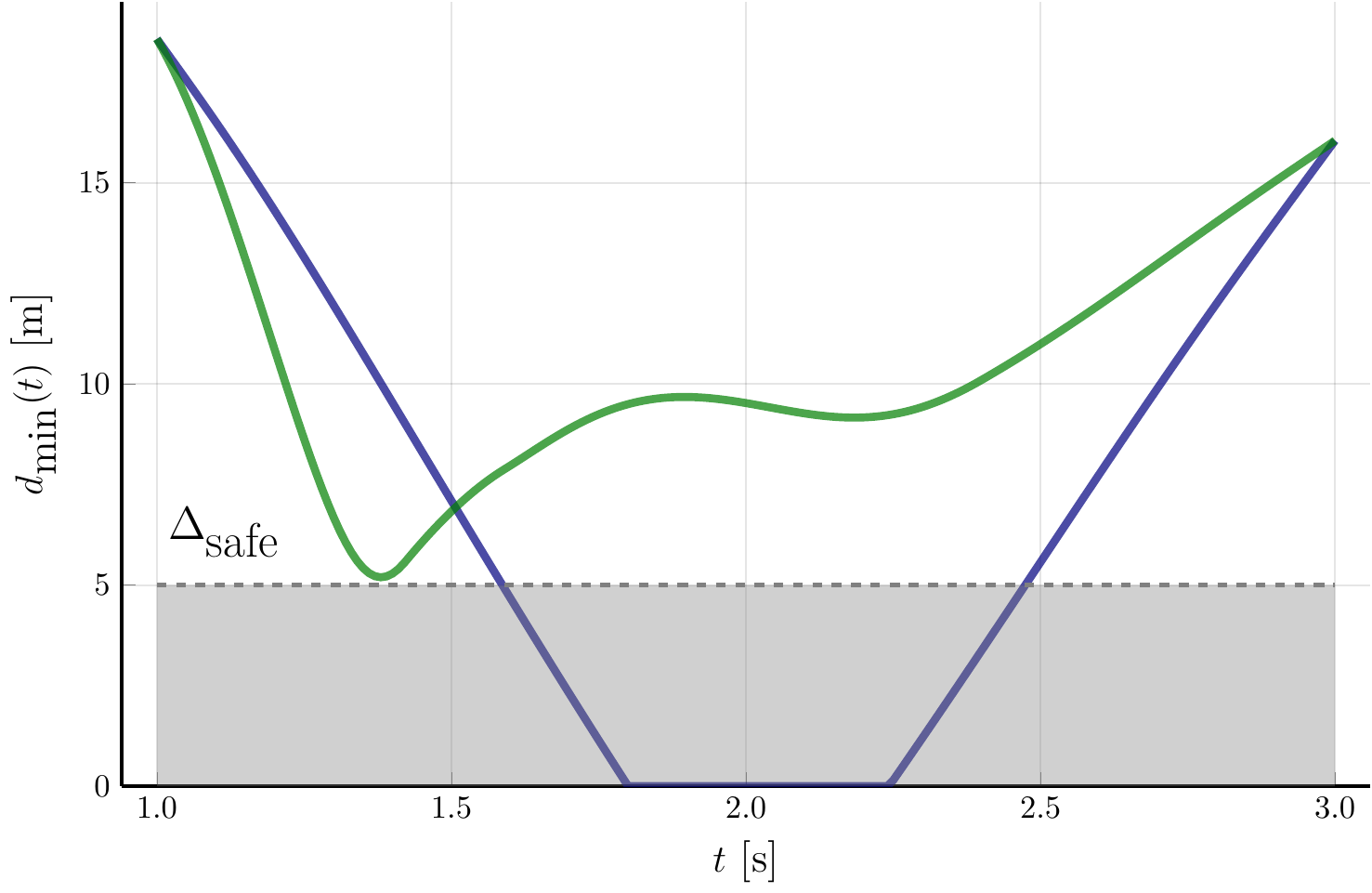}}
    \caption{Simulation of the second scenario.}
    \label{fig:crossing2}
\end{figure}

\section{RESULTS}
\label{s:results}
Two simulations are considered to demonstrate the application of the proposed Gaussian process prediction method. The first is a merging scenario, where an obstacle moves from a parallel lane into the agent's lane in front of the agent. The second scenario is a perpendicular cross, where the obstacle crosses in front of the agent. In each case, the agent considers two planned trajectories and checks to see which maintains the safety distance away from the predicted obstacle uncertainty region.

In both situations, the observed data is shown as gray points, and the intention as a red triangle. Based on this information, the confidence region is constructed and shown in red, with the solid red line being the expected future obstacle trajectory. The blue and green lines show the agent's past trajectory, with dotted lines showing possible future trajectories.

The following parameters are fixed in these simulations. The time interval of interest is fixed, with $t_a=0$, $t_b=1$ and $t_I=3$. We assume that we have $n=10$ samples on the observation time interval. The kernel parameters for both $x$ and $y$ dimensions are $\theta_f = 10$, $\theta_{f'} = 30$ and $\tau = 11$.

\subsection{Merging with Speed Adjustment}
In the first scenario, shown in Figure~\ref{fig:merging_xy}, the obstacle begins to move in the $x$-$y$ plane towards the agent. The assumed intention is that the vehicle will overtake and merge in front of the agent at an appropriate distance ahead, shown as the red triangle. The agent considers two possible trajectories, one that reduces its $x$ velocity and one that does not. In Figure~\ref{fig:merging_x}, we see that the green trajectory, which alters the vehicle's speed, is outside the red, shaded obstacle uncertainty region for the full prediction horizon. The blue trajectory shows that a constant $x$ velocity would lead to the agent entering the uncertainty region. The safety distance  is calculated over time in Figure~\ref{fig:merging_d}. The green trajectory, with altered $x$ velocity remains greater than $\Delta_{\textrm{safe}}$, the blue trajectory violates the safety distance constraint, leading to a predicted collision.

In Figure~\ref{fig:merging1}, the uncertainty region of the obstacle is shown at four times during the prediction interval in the $x$-$y$ plane. The red cross indicates the expected obstacle location, and the red ellipse is the uncertainty region. The light gray shadow of the uncertainty region shows the past $0.5s$ of uncertainty regions. This interval is colored gray to indicate that it is past data and not actively avoided. The vehicle locations are given as blue and green points, with trails showing the past $0.5s$ interval of the trajectory. Clearly the green points do not intersect with the uncertainty region. The blue points, representing the vehicle location if it were to hold a constant velocity, are inside the uncertainty region in Figures~\ref{fig:merging_seq2} and~\ref{fig:merging_seq3}.

In this simulation the safety distance is $2$~m. The measurement noise is zero mean with a variance of $0.25$ in both position and velocity in the $x$~axis and a variance of $0.01$ in both position and velocity in the $y$~axis. The intention uncertainty is set with a variance of $1$ for both position and velocity in the $x$~axis, and $0.0625$ for both position and velocity in the $y$~axis.
\subsection{Perpendicular Cross with Path Re-planning}
In the second scenario, shown in Figure~\ref{fig:crossing_xy}, the obstacle moves in the $x$-$y$ plane towards the agent. The assumed intention is that the vehicle will continue its current path, shown as the red triangle. The agent considers two possible trajectories, one that responds to the obstacle and one that does not. In Figures~\ref{fig:crossing_x} and~\ref{fig:crossing_y}, we see that the green trajectory, that avoids the obstacle, is adjusted so that the $x$ and $y$ components of the trajectory are not simultaneously in the uncertainty region. The blue trajectory shows that a constant $x$ velocity would lead to the agent entering the uncertainty region. Since this blue trajectory is inside the $x$ uncertainty region for all of $\mathcal{T}_P$, there is no action that could be taken in the $y$ dimension that would avoid predicted collision. The safety distance  is calculated over time in Figure~\ref{fig:crossing_d}. The green trajectory, with altered $x$ and $y$ trajectories, remains greater than $\Delta_{\textrm{safe}}$, while the blue trajectory violates the safety distance constraint, leading to a predicted collision.

In Figure~\ref{fig:crossing1}, the uncertainty region of the obstacle is shown at four times during the prediction interval in the $x$-$y$ plane. The red cross indicates the expected obstacle location, and the red ellipse is the uncertainty region. The light gray shadow of the uncertainty region shows the past $0.5s$ of uncertainty regions. This interval is colored gray to indicate that it is past data and not actively avoided. The vehicle locations are given as blue and green points, with trails showing the past $0.5s$ interval of the trajectory. Clearly the green points do not intersect with the uncertainty region. The blue point, representing the vehicle location if it were to continue its previous behavior, is inside the uncertainty region in Figure~\ref{fig:merging_seq3}.

In this simulation, the safety distance is $5$~m and the measurement noise is zero mean with a variance of $1$ in position and  $4$ in velocity. The intention uncertainty is set with a variance of $4$ for position and $16$ for velocity.

\section{CONCLUSION}
\label{s:con}
In this paper, we have presented a Gaussian process based method to predict collision with uncertainty quantification. We show that with this approach the uncertainty regions for the future location of the obstacle can be modeled and  parameterized, which allows collisions to be predicted efficiently. With the assumption of a known yet probabilistic intention, the conservatism of the method is reduced. The presented simulations show that the method can be used to construct an estimate for an unknown obstacle's future position based solely on measurements of past position and velocity. The data-based approach reduces the knowledge of the obstacle required for many trajectory estimation methods.

The current method is formulated as a prediction at a single time instance rather than a series of times. We wish to extend this work by considering predictions over a series of times with two components: (1) a method of updating hyperparameters and (2) a method of handling dynamic intentions. The prediction at each step relies on the choice of  the scaling hyperparameters for both position and velocity in two dimensions.  In future work, we wish to compare standard hyperparameter optimization to hyperparameter choice based on the physical limitations of the obstacle. Additionally, the assumption of an a priori known intention at a fixed time is restrictive in a dynamic environment. Future work will investigate propagating the intention as the output of an intent estimation procedure.

\section{ACKNOWLEDGMENTS}
This work is supported by the National Aeronautics and Space Administration (NASA), National Science Foundation (NSF) National Robotics Initiative (NRI) award \#1820639 and NSF NRI award \#1528036.
\bibliographystyle{ieeetr}
\bibliography{main}

\end{document}